\newtheorem{lemma}{Lemma}
\title{A Metric Hybrid Planning Approach to Solving Pandemic Planning Problems with Simple SIR Models}
\author{
    Ari Gestetner, Buser Say
}
\begin{document}

\maketitle

\begin{abstract}
A pandemic is the spread of a disease across large regions, and can have 
devastating costs to the society in terms of health, economic and social. 
As such, the study of effective pandemic mitigation strategies can yield 
significant positive impact on the society. A pandemic can be mathematically 
described using a compartmental model, such as the Susceptible–Infected–Removed 
(SIR) model. In this paper, we extend the solution equations of the SIR model 
to a state transition model with lockdowns. We formalize a metric 
hybrid planning problem based on this state transition model, and solve it 
using a metric hybrid planner. We improve the runtime 
effectiveness of the metric hybrid planner with the addition of valid 
inequalities, and demonstrate the success of our approach both theoretically 
and experimentally under various challenging settings.
\end{abstract}

\section{Introduction}

A pandemic is the spread of an infectious disease across large regions, 
and can incur devastating costs to the society in terms of health,
economic and social. A pandemic can be mathematically described as a 
compartmental model, such as the Susceptible–Infected–Removed 
(SIR) model.~\cite{Kermack1927, Bailey1975} The SIR model categorizes 
each individual in the population into three compartments, namely: 
either Susceptible, Infected or Removed, and uses a system of partial 
differential equations (PDEs) to describe the interaction between these 
compartments. As visualized in figure~\ref{fig:example}, the purpose of 
a pandemic planning problem is to find a solution that keeps the number 
of removed individuals under a certain threshold while also keeping the 
number of infected individuals under a certain threshold (i.e., without 
overwhelming the healthcare system) over some duration (e.g., until 
vaccinations are expected to become available), by deciding on the selection, 
timing and duration of effective mitigation decisions (e.g., lockdowns).

\begin{figure}[H]
    \centering
    \includegraphics[width=.48\linewidth]{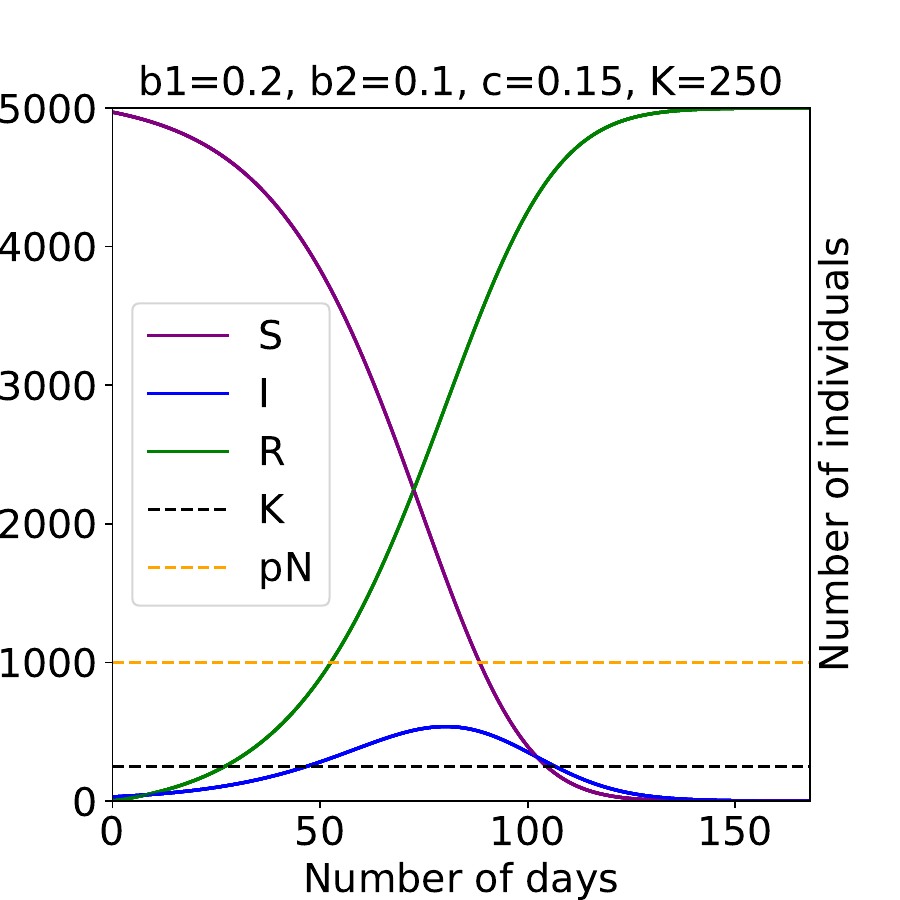}
    \label{fig:example_pandemic_0.2_0.1_0.15_250_noop}
    \centering
    \includegraphics[width=.48\linewidth]{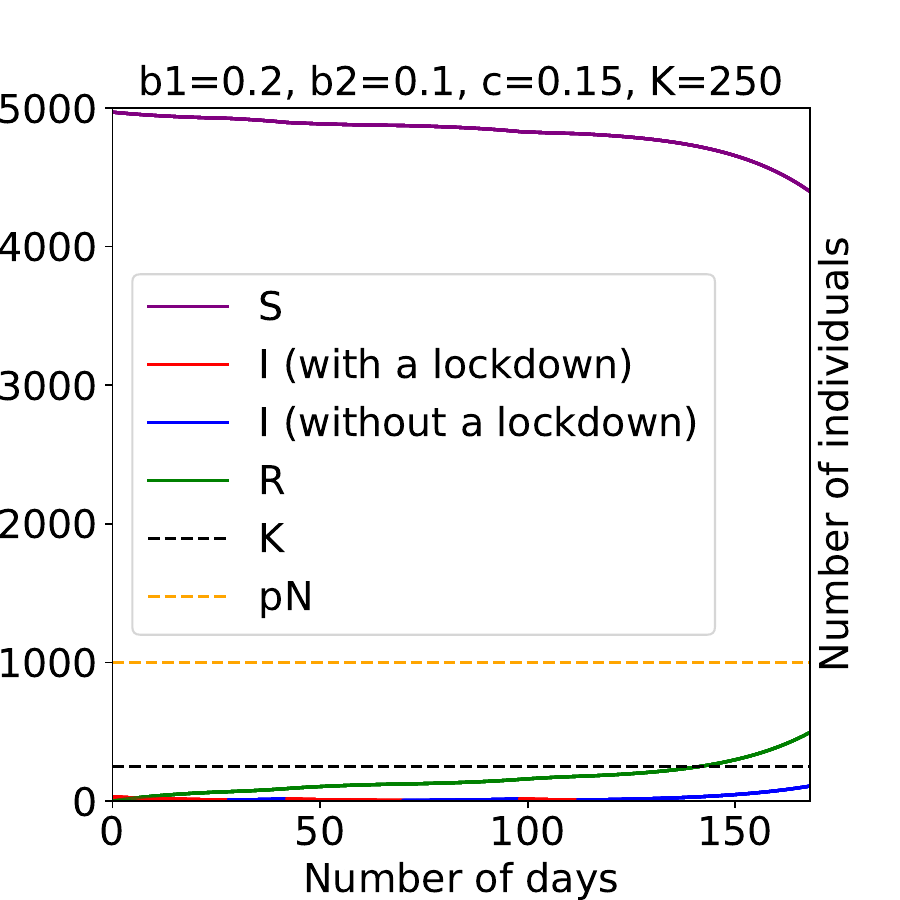}
    \label{fig:example_pandemic_0.2_0.1_0.15_250_plan}
\caption{Visualization of two pandemic curves based on the SIR model. The purple 
curves represent susceptible individuals, blue curves represent the infected 
individuals when there are no lockdowns and the green curves represent the 
removed individuals. The goal of pandemic planning problem is to keep the number 
of removed individuals under a certain threshold (i.e., orange dashed line) while 
also keeping the number of infected individuals under a certain threshold (i.e., 
black dashed line). On the left is the visualization of the natural spread of a 
disease without any lockdowns and on the right is the spread of the same disease 
under the plan that is produced by SCIPPlan using lockdowns (i.e., red lines).}
\label{fig:example}
\end{figure}

Automated planning formally reasons about the selection, timing and 
duration of actions to reach desired states of the world as best as 
possible.~\cite{Nau2004} Automated planners have significantly improved 
the ability of autonomous systems to solve challenging tasks such as, 
traffic control~\cite{McCluskey2017}, smart grid control~\cite{Thiebaux2013}, 
Heating, Ventilation and Air Conditioning (HVAC) control~\cite{Say2017a}, 
and Unmanned Aerial Vehicles (UAV) control~\cite{Ramirez2018}. While 
automated planning has been shown to successfully control many physical 
systems, the control of pandemics still presents non-trivial challenges 
to many of the existing automated planners mainly due to the complexity 
of the underlying system of equations that govern the evolution of the 
spread of the disease over time, which demands constrained and 
combinatorial sequential reasoning over PDEs. 

A common approach to sequential decision making with a PDE-based model
is to discretize the time and approximately update the state of the 
world with each discrete increment of 
time.~\cite{Morari1988,Li2008,Penna2009,Piotrowski2016,Scala2016} The main 
advantage of this approach is that it allows systems that can be modelled 
using PDEs with no known solution equations to be approximately controlled. 
The main disadvantage of this approach is that its accuracy and performance 
are highly sensitive to the arbitrary choice of the granularity of the time 
discretization. This can be particularly problematic for systems that 
exhibit exponential growth behaviour, such as the case in pandemics. An 
alternative approach here is to work directly with the solution equations 
of the PDEs. The main advantage of this approach is that it allows for the 
continuous time control of the underlying 
system.~\cite{Coles2012,Shin2005,Cashmore2016,Say2019b,Chen2021}
The main disadvantage of this approach is that not all PDEs have known 
solution equations. However, it should be noted that the advancements of 
machine learning techniques~\cite{Raissi2019,Karniadakis2021} allow for the 
accurate approximation of the solution equations of many complex systems of 
PDEs from (real and/or simulated) data, which makes this disadvantage less 
consequential in practise.

In this paper, we will solve the pandemic planning problem using an 
automated planner that is capable of synthesizing plans 
over continuous time. Specifically, we will begin by building a state 
transition model based on the solution equations of the SIR 
model~\cite{Bailey1975,Gleissner1988,Bohner2019} with infection rate 
that is a function of the lockdown decision. Given the state transition 
model, we will formalize the pandemic planning problem as a metric hybrid 
planning problem~\cite{Say2019b}. We will introduce domain-dependent 
valid inequalities in order to improve the computational effectiveness 
of the automated planner. We will show theoretical results on the 
finiteness and the correctness of the automated planner for solving 
the pandemic planning problem, and experimentally validate the 
success of our approach under various challenging settings.
 
\section{Background}

We begin by presenting the definition of the metric hybrid planning problem 
that will be used to formalize the pandemic planning problem, a metric hybrid 
planner for solving the pandemic planning problem, and the SIR model that will 
form the basis of the pandemic planning problem.

\subsection{The Metric Hybrid Planning Problem}

\newcommand{\nn}{n} 
\newcommand{\mm}{m} 

A \emph{metric hybrid planning problem}~\cite{Say2018b,Say2019b,Say2023a} 
is defined as a tuple ${\Pi} = \langle S,A,\Delta,C^{I},C^{T},T,V,G,R,H \rangle$ where 
\begin{itemize}
    \item $S=\{s_1, \dots ,s_{\nn}\}$ is the set of state variables 
with bounded domains $D_{s_1}, \dots, D_{s_{\nn}}$ for positive integer 
$\nn \in \mathbb{Z}^{+}$,
    \item $A=\{a_1, \dots ,a_{\mm}\}$ is the set of action variables 
with bounded domains $D_{a_1}, \dots, D_{a_{\mm}}$ for positive integer 
$\mm \in \mathbb{Z}^{+}$,
    \item $\Delta \in [\epsilon,M]$ is the duration of a step for positive real 
numbers $\epsilon \in \mathbb{R}^{+}$ and $M \in \mathbb{R}^{+}$ such that $\epsilon \leq M$,
    \item $C^{I}: \prod_{i=1}^{\nn} D_{s_i} \times \prod_{i=1}^{\mm} D_{a_i}
\times [\epsilon,M] \rightarrow \mathbb{R}$ is the instantaneous 
constraint function that is used to define the constraint 
$C^{I}(s^{t}_{1}, \dots, s^{t}_{\nn}, a^{t}_{1}, \dots, a^{t}_{\mm}, \Delta^{t}) 
\leq 0$ for all steps $t \in \{1,\dots,H\}$,
    \item $C^{T}: \prod_{i=1}^{\nn} D_{s_i} \times \prod_{i=1}^{\mm} D_{a_i}
\times [\epsilon,M] \rightarrow \mathbb{R}$ is the temporal 
constraint function that is used to define the constraint 
$C^{T}(s^{t}_{1}, \dots, s^{t}_{\nn}, a^{t}_{1}, \dots, a^{t}_{\mm},\Delta^{t}) 
\leq 0$ for all steps $t \in \{1,\dots,H\}$,
    \item $T: \prod_{i=1}^{\nn} D_{s_i} \times \prod_{i=1}^{\mm} D_{a_i}
\times [\epsilon,M] \rightarrow \prod_{i=1}^{\nn} D_{s_i}$ 
is the state transition function,
    \item $V$ is a tuple of constants $\langle V_1,\dots, V_{\nn } \rangle \in 
    \prod_{i=1}^{\nn} D_{s_i}
    $ denoting the initial values of state variables,
    \item $G: \prod_{i=1}^{\nn} D_{s_i} \rightarrow \mathbb{R}$ 
is the goal function that is used to define 
the constraint $G(s^{H+1}_{1}, \dots, s^{H+1}_{\nn}) \leq 0$,
    \item $R: \prod_{i=1}^{\nn} D_{s_i} \times \prod_{i=1}^{\mm} D_{a_i}
\times [\epsilon,M] \rightarrow \mathbb{R}$ is the reward 
function, and
    \item $H\in \mathbb{Z}^{+}$ is the horizon.
\end{itemize}

A \emph{solution} to $\Pi$ is defined as a tuple of values $\langle \bar{a}^{t}_{1}, 
\dots, \bar{a}^{t}_{\mm} \rangle \in \prod_{i=1}^{\mm} D_{a_i}$ for all 
action variables $A$ and a value $\bar{\Delta}^{t} \in [\epsilon,M]$ for 
all steps $t\in \{1,\dots, H\}$ and a tuple of values $\langle \bar{s}^{t}_{1}, 
\dots, \bar{s}^{t}_{\nn} \rangle \in \prod_{i=1}^{\nn} D_{s_i}$ for all state 
variables $S$ and steps $t\in \{1,\dots, H+1\}$ if and only if the following 
conditions hold:
\begin{enumerate}
    \item $\bar{s}^{1}_{i} = V_i$ for all $i \in \{1, \dots, \nn\}$,
    \item $T(\bar{s}^{t}_{1}, \dots, \bar{s}^{t}_{\nn }, \bar{a}^{t}_{1}, 
\dots, \bar{a}^{t}_{\mm}, \bar{\Delta}^{t}) = \langle \bar{s}^{t+1}_{1}, 
\dots, \bar{s}^{t+1}_{\nn } \rangle$ for steps $t\in \{1,\dots, H\}$,
    \item $C^{I}(\bar{s}^{t}_{1}, \dots, \bar{s}^{t}_{\nn }, \bar{a}^{t}_{1}, 
\dots, \bar{a}^{t}_{\mm}, \bar{\Delta}^{t}) \leq 0$ for steps $t\in \{1,\dots, H\}$,
    \item $C^{T}(\bar{s}^{t}_{1}, \dots, \bar{s}^{t}_{\nn }, \bar{a}^{t}_{1}, 
\dots, \bar{a}^{t}_{\mm}, x^{t}) \leq 0$ for steps $t\in \{1,\dots, H\}$ 
and for all values of $x^{t} \in [0,\bar{\Delta}^{t}]$, and
    \item $G(\bar{s}^{H+1}_{1}, \dots, \bar{s}^{H+1}_{\nn }) 
\leq 0$.
\end{enumerate}

Similarly, an \emph{optimal solution} to $\Pi$ is a solution that also maximizes 
the reward function $R$ over the planning horizon $H$ such that:
$$\max_{\substack{a^{1}_{1}, \dots, a^{H}_{\mm} \\ \Delta^{1}, \dots, \Delta^{H}}} 
\sum_{t=1}^{H}R(s^{t}_{1}, \dots, s^{t}_{\nn }, a^{t}_{1}, \dots, 
a^{t}_{\mm}, \Delta^{t})$$
Next, we will present a methodology for solving $\Pi$.

\subsection{Solving the Metric Hybrid Planning Problem}
\label{sec:solving_pi}

SCIPPlan~\cite{Say2018b,Say2019b,Say2023a} is a metric hybrid planner that 
performs on mathematical optimization. SCIPPlan compiles $\Pi$ into the 
mathematical optimization model that is provided below.

\begin{align}
&\max_{\substack{a^{1}_{1}, \dots, a^{H}_{\mm} \\ \Delta^{1}, \dots, \Delta^{H}}} 
\sum_{t=1}^{H}R(s^{t}_{1}, \dots, s^{t}_{\nn }, 
a^{t}_{1}, \dots, a^{t}_{\mm}, \Delta^{t})\label{scip0}\\
&s^{1}_{i} = V_i \quad \forall_{i\in \{1,\dots, \nn\}}\label{scip1}\\
&T_i(s^{t}_{1}, \dots, \Delta^{t}) = s^{t+1}_{i} \quad 
\forall_{i \in \{1, \dots, \nn\}, t\in \{1,\dots, H\}}\label{scip2}\\
&C^{I}(s^{t}_{1}, \dots, \Delta^{t}) \leq 0 \quad 
\forall_{t\in \{1, \dots, H\}}\label{scip3}\\
&C^{T}(s^{t}_{1}, \dots, c^{t} \Delta^{t}) \leq 0 \quad 
\forall_{c^{t}\in [0,1], t\in \{1, \dots, H\}}\label{scip4}\\
&G(s^{H+1}_{1}, \dots, s^{H+1}_{\nn}) \leq 0 \label{scip5}\\
&s^{t}_{i} \in D_{s_i} \quad \forall_{i\in \{1, \dots, \nn\}, t\in \{1, \dots, H+1\}} 
\label{scip6}\\ 
&a^{t}_{i} \in D_{a_i} \quad \forall_{i\in \{1, \dots, \mm\}, t\in \{1, \dots, H\}} 
\label{scip7}\\
&\epsilon \leq \Delta^{t} \leq M \quad \forall_{t\in \{1, \dots, H\}}\label{scip8}
\end{align}

The brief description of the mathematical optimization model is as follows. 
The objective function (\ref{scip0}) maximizes the total reward accumulated 
over the horizon. Constraint (\ref{scip1}) sets the initial values of all 
state variables. Constraint (\ref{scip2}) sets the values of state variables 
in the next step given the values of state and action variables in the current 
step. Constraints (\ref{scip3}-\ref{scip5}) enforce the instantanous, temporal 
and goal constraints. Constraints (\ref{scip6}-\ref{scip8}) set the domains 
of state, action and duration variables, respectively. Note that in the mathematical 
optimization model presented above, constraint (\ref{scip4}) is an interval 
constraint that holds for all values of the interval $c^{t}\in [0,1]$. SCIPPlan 
uses a constraint generation framework that iteratively identifies unique 
violated values of $c^{t}\in [0,1]$ and generates constraint (\ref{scip4}) 
with the indentified violated values of $c^{t}$. It has been shown that SCIPPlan 
terminates either with a solution to $\Pi$ with at most some constant $\gamma$ 
violation of constraint (\ref{scip4}) or proves the infeasibility of $\Pi$ in 
finite number of constraint generation iterations.\footnote{The proof assumes 
the temporal constraint function $C^{T}$ to be Lipschitz continuous. Other 
control functions can be used to derive domain-specific bounds based on the 
analysis of $C^{T}$, as we will show in the Theoretical Results section.}
Next, we will describe the SIR model that will be used to construct the state 
transition function $T$ of our planning problem $\Pi$.

\subsection{The SIR Model with Constant Infection and Recovery Rates}
\label{sec:sir}

The SIR model~\cite{Kermack1927} is a compartmental model that groups a 
population into three compartments, namely: Susceptible, Infected and 
Removed. The SIR model mathematically describes the interaction between 
these groups using a system of partial differential equations. In this 
paper, we will build a state transition function $T$ based on the solution 
equations of the SIR model with constant infection $b\in [0,1]$ and removal 
$c\in [0,1]$ rates.~\cite{Bailey1975,Gleissner1988,Bohner2019} The system of 
partial differential equations:
\begin{align}
&x' = -\frac{bxy}{x+y}\label{sus}\\
&y' = \frac{bxy}{x+y} - cy\label{inf}\\
&z' = cy\label{rem}
\end{align}
has the following solution equations for $b\neq c$:
\begin{align}
&x^{t+1} = x^{t}(1+\frac{y^{t}}{x^{t}})^{\frac{b}{b-c}} 
(1 + \frac{y^{t}}{x^{t}} e^{(b-c)\Delta^t})^{-\frac{b}{b-c}}\label{sus_sol}\\
&y^{t+1} = y^{t}(1+\frac{y^{t}}{x^{t}})^{\frac{b}{b-c}} 
(1 + \frac{y^{t}}{x^{t}} e^{(b-c)\Delta^t})^{-\frac{b}{b-c}} 
e^{(b-c)\Delta^t}\label{inf_sol}\\
&z^{t+1} = N - (x^{t} + y^{t})^{\frac{b}{b-c}} 
(x^{t} + y^{t}e^{(b-c)\Delta^t})^{-\frac{c}{b-c}}\label{rem_sol}
\end{align}
where $x^t$, $y^t$ and $z^t$ represents the number of susceptible, 
infected and removed individuals in a population of size 
$N\in \mathbb{Z}^{+}$. In this paper, equations (\ref{sus_sol}-\ref{rem_sol}) 
will form the basis of the state transition function $T$ that will be used 
to formalize the pandemic planning problem as a metric hybrid planning problem 
$\Pi$.
Next, we will present the contributions of our paper, namely: 
(i) the formalization of the pandemic planning problem as a metric hybrid 
planning problem and (ii) the valid inequalities to help improve the 
effectiveness of the metric hybrid planner.

\section{Pandemic Planning Using a Metric Hybrid Planner}

In this section, we will demonstrate how the pandemic planning problem 
can be effectively solved using SCIPPlan.

\subsection{The State Transition Model with Lockdowns}

We will extend the solution equations of the previously 
described SIR model to build a state transition function $T$ with lockdowns. 
The inputs of function $T$, namely the elements of the tuple 
$\langle s_{1}, s_{2}, s_{3}, a_{1}, \Delta \rangle$, are defined as follows.
\begin{itemize}
    \item $s_1, s_2 ,s_3 \in [0,N]$ are the bounded state variables 
representing the number of susceptible $x$, infected $y$ and removed 
$z$ individuals in a population of size $N$,
    \item $\Delta$ is the fixed duration of a step with the constant 
value $\delta \in [\epsilon, M]$, and
    \item $a_1 \in \{0,1\}$ is the binary action variable representing 
whether the population is under lockdown (i.e., $a_1 = 1$) or not 
(i.e., $a_1 = 0$) over the duration $\Delta$.
\end{itemize}

Similarly, the output of function $T$ is defined as the tuple of state 
variables $\langle s_{1}, s_{2}, s_{3} \rangle$. Next, we will modify 
equations (\ref{sus_sol}-\ref{rem_sol}) with the addition of the binary 
action variable $a^{t}_{1}$ which will determine the value of the 
infection rate $b(a_1)$ over the duration $\Delta$. The value of the 
infection rate $b(a_1)$ is defined as:
\begin{align}\label{b_new}
    b(a_1) = 
    \begin{cases}
    b_1,& \text{if } a_1=1\\
    b_2,& \text{otherwise.}
\end{cases}
\end{align}
where constants $b_1, b_2\in [0,1]$ represent the infection rates with 
and without a lockdown, respectively, such that $b_1 > b_2$. Given the 
new definition of the infection rate $b(a_1)$, the state transition function 
$T$ can be written by symbolically substituting every occurrence of $x^{t+1}$, $y^{t+1}$, $z^{t+1}$, $x^{t}$, $y^{t}$, $z^{t}$ and $b$ in equations (\ref{sus_sol}-\ref{rem_sol}) with $s_{1}^{t+1}$, $s_{2}^{t+1}$, $s_{3}^{t+1}$, $s_{1}^{t}$, $s_{2}^{t}$, $s_{3}^{t}$ and $b(a_1)$, respectively.
Next, we will formalize the pandemic planning problem as a metric hybrid 
planning problem $\Pi$.

\subsection{The Pandemic Planning Problem}

The remaining components of the metric hybrid planning problem $\Pi$ 
can be defined to describe the pandemic planning problem as follows.

\begin{itemize}
    \item $C^{T}(s^{t}_{1}, s^{t}_{2}, s^{t}_{3}, a^{t}_{1}, \Delta^{t}) \leq 0$ is 
a constraint on the maximum number of infected individuals allowed at any given time. 
The temporal constraint enforces that the right-hand side of the state transition 
function for the state variable $s_2^{t+1}$ never exceeds the constant threshold 
$K\in \mathbb{Z}^{+}$. Similarly, the domain of the state variable 
$s_2^t$ that represents the number of infected individuals is also bounded 
by $K$ such that $s_2^t \in [0,K]$. 
    \item $V_1$, $V_2$ and $V_3$ are the initial values of the number of susceptible, 
infected and removed individuals in the population, and are defined as $V_1 = N - I$, 
$V_2 = I$ and $V_3 = 0$ where the constant $I\in \mathbb{Z}^{+}$ denotes the initial 
number of infected individuals.
    \item $G(s^{H+1}_{1}, s^{H+1}_{2}, s^{H+1}_{3}) \leq 0$ is a constraint on the 
maximum fraction $p\in [0,1]$ of the population that can be removed, and is in the 
form of: $s^{H+1}_{3} \leq p N$.
    \item $R(s^{t}_{1}, s^{t}_{2}, s^{t}_{3}, a^{t}_{1}, \Delta^{t})$ is the reward 
function that incurs a penalty for a lockdown, and is in the form of: $-a^{t}_1$.
\end{itemize}

Given the components of $\Pi$ are defined, SCIPPlan can compile $\Pi$ into a 
mathematical optimization model and solve it via constraint generation 
(i.e., previously described in the Solving the Metric Hybrid Planning Problem section). 
However, 
the computational effectiveness of the underlying spatial branch-and-bound solver 
relies on its ability to effectively derive valid upper and lower bounds on the 
(nonlinear) expressions that make up the mathematical optimization model (i.e., 
the objective function (\ref{scip0}) and the constraints (\ref{scip1}-\ref{scip5})). 
Therefore, we will next derive valid inequalities in order to improve the 
computational effectiveness of SCIPPlan.

\subsection{Valid Inequalities}
\label{sec:valid_ineq}

In this section, we will introduce domain-dependent valid inequalities in order to 
improve the computational effectiveness of SCIPPlan. Namely, we will introduce two 
types of valid inequalities that are based on (i) the monotonicity of state variables 
$s_1$ and $s_3$, and (ii) the assumption that the size of the population is constant.
\begin{enumerate}
    \item Monotonic state variables: The analysis of equations (\ref{sus}) and (\ref{rem}) reveals 
that $s_1$ is non-increasing and $s_3$ is non-decreasing.
    \item Constant population: The SIR model~\cite{Bailey1975} assumes that the size 
of the population remains constant.
\end{enumerate}

The monotonicity of state variables and the constant population size assumption 
are used to define the instantaneous constraint $C^{I}(s^{t}_{1}, s^{t}_{2}, 
s^{t}_{3}, a^{t}_{1}, \Delta^{t}) \leq 0$ and redefine the goal constraint 
$G(s^{H+1}_{1}, s^{H+1}_{2}, s^{H+1}_{3}) \leq 0$.

\subsection{The Mathematical Optimization Model}

In this section, we present the mathematical optimization model 
that is compiled by SCIPPlan to represent and solve the pandemic 
planning problem.

\begin{align}
&\max_{a^{1}_{1}, \dots, a^{H}_{1}} 
\sum_{t=1}^{H}-a^{t}_1\label{scip0_pand}\\
&s^{1}_{1} = N - I \label{scip1_pand}\\
&s^{1}_{2} = I \label{scip2_pand}\\
&s^{1}_{3} = 0 \label{scip3_pand}\\ 
&s^{t+1}_{1} = s^{t}_{1}(1+\frac{s^{t}_{2}}{s^{t}_{1}})^{e_1} 
(1 + \frac{s^{t}_{2}}{s^{t}_{1}} e^{e_2})^{-e_1} 
\quad \forall_{t\in \{1,\dots, H\}}\label{scip4_pand}\\
&s^{t+1}_{2} = s^{t}_{2}(1+\frac{s^{t}_{2}}{s^{t}_{1}})^{e_1} 
(1 + \frac{s^{t}_{2}}{s^{t}_{1}} e^{e_2})^{-e_1} e^{e_2} 
\quad \forall_{t\in \{1,\dots, H\}}\label{scip5_pand}\\
&s^{t+1}_{3} = N - (s^{t}_{1} + s^{t}_{2})^{e_1} 
(s^{t}_{1} + s^{t}_{2}e^{e_2})^{-e_3} 
\quad \forall_{t\in \{1,\dots, H\}}\label{scip6_pand}\\
&s_{1}^{t} + s_{2}^{t} + s_{3}^{t} = N \quad 
\forall_{t\in \{1, \dots, H\}}\label{scip7_pand}\\
&(1+\frac{s_{2}^{t}}{s_{1}^{t}})^{e_1} 
(1 + \frac{s_{2}^{t}}{s_{1}^{t}} e^{(b(a_1)-c) \Delta^t})^{-e_1} 
\leq 1 \quad \forall_{t\in \{1, \dots, H\}} \label{scip8_pand}\\ 
&(s_{1}^{t} + s_{2}^{t})^{e_1} (s_{1}^{t} + s_{2}^{t} 
e^{e_2})^{-e_3} + s_{3}^{t} \leq N \quad 
\forall_{t\in \{1, \dots, H\}}\label{scip9_pand}\\
&s^{t}_{2} (1+\frac{s^{t}_{2}}{s^{t}_{1}})^{e_1} 
(1 + \frac{s^{t}_{2}}{s^{t}_{1}} e^{e_2})^{-e_1} e^{e_2} \leq K \quad 
\forall_{c^{t}\in [0,1], t\in \{1, \dots, H\}}\label{scip10_pand}\\
&s_{1}^{H+1} + s_{2}^{H+1} + s_{3}^{H+1} = N \label{scip11_pand}\\
&s^{H+1}_{3} \leq p N \label{scip12_pand}\\
&s^{t}_{1} \in [0,N] \quad \forall_{t\in \{1, \dots, H+1\}} 
\label{scip13_pand}\\ 
&s^{t}_{2} \in [0,K] \quad \forall_{t\in \{1, \dots, H+1\}} 
\label{scip14_pand}\\ 
&s^{t}_{3} \in [0,N] \quad \forall_{t\in \{1, \dots, H+1\}} 
\label{scip15_pand}\\
&a^{t}_{1} \in \{0,1\} \quad \forall_{t\in \{1, \dots, H\}} 
\label{scip16_pand}\\
&\Delta^{t} = \delta \quad \forall_{t\in \{1, \dots, H\}}\label{scip17_pand}
\end{align}

where $e_1$, $e_2$ and $e_3$ denote the expressions 
$\frac{b(a_1)}{b(a_1)-c}$, $(b(a_1)-c)\Delta^t$ and 
$\frac{c}{b(a_1)-c}$, respectively. The brief description 
of the resulting mathematical optimization model is as 
follows. The objective function (\ref{scip0_pand}) 
minimizes the total number of lockdowns used over 
the horizon. Constraints 
(\ref{scip1_pand}-\ref{scip3_pand}) set the initial 
values of all state variables. Constraints 
(\ref{scip4_pand}-\ref{scip6_pand}) represent the 
state transition function $T$. Constraints 
(\ref{scip7_pand}-\ref{scip9_pand}) are the valid 
inequalities that are introduced in the Valid Inequalities
section. Constraint 
(\ref{scip10_pand}) is the temporal constraint. 
Constraints (\ref{scip11_pand}-\ref{scip12_pand}) 
are the goal state constraints. Finally, constraints 
(\ref{scip13_pand}-\ref{scip17_pand}) specify the 
domains of state and action variables, and the fixed 
duration of each step.

\section{Theoretical Results}
\label{sec:proof}

In this section, we present our theoretical results on the finiteness 
and correctness of SCIPPlan for solving the pandemic planning problem 
with an exponential temporal constraint function $C^{T}$. 

\begin{lemma}[Finiteness and correctness of SCIPPlan]
\label{lemma:scipplan_finiteness_exp}
SCIPPlan finds a solution to the pandemic planning problem within 
some constant $\gamma > 0$ constraint violation tolerance or proves 
its infeasibility in finite number of constraint generation iterations.
\end{lemma}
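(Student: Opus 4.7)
The plan is to reduce the claim to the known finiteness and correctness result for SCIPPlan under a Lipschitz continuous temporal constraint (referenced in the footnote of the Solving the Metric Hybrid Planning Problem section). Concretely, I will show that the temporal constraint (\ref{scip10_pand}), viewed as a function of the interval variable $c^t \in [0,1]$, is Lipschitz continuous with an explicit, finite constant on the compact feasible region of the mathematical optimization model (\ref{scip0_pand})--(\ref{scip17_pand}).

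First, I will rewrite the left-hand side of (\ref{scip10_pand}) to eliminate the $s_1^t$ appearing in a denominator, which is the only potential source of non-smoothness. Multiplying $(1 + s_2^t/s_1^t)^{e_1}$ and $(1 + (s_2^t/s_1^t) e^{e_2})^{-e_1}$ by matching positive and negative powers of $s_1^t$, the factors $s_1^{e_1}$ and $s_1^{-e_1}$ cancel, yielding the algebraically equivalent expression
\begin{equation*}
f(c^t) = s_{2}^{t}\,(s_{1}^{t} + s_{2}^{t})^{e_1}\,\bigl(s_{1}^{t} + s_{2}^{t}\, e^{(b(a_1)-c)\, c^{t} \Delta^t}\bigr)^{-e_1}\, e^{(b(a_1)-c)\, c^{t} \Delta^t},
\end{equation*}
in which no state variable appears in a denominator. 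Over the feasible region we have $s_1^t + s_2^t \in [0,N]$ by the constant population constraint (\ref{scip7_pand}), $s_2^t \in [0,K]$, $\Delta^t = \delta \in [\epsilon,M]$, $c^t \in [0,1]$, $a_1^t \in \{0,1\}$, and $b(a_1), c \in [0,1]$ with $b(a_1) \neq c$ (a mild assumption needed for the solution-form identity, handleable by an infinitesimal perturbation of $b_1$ or $b_2$ if necessary).

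Next, I will compute $df/dc^t$ and bound $|df/dc^t|$ uniformly in terms of the fixed constants $N$, $K$, $M$, $\epsilon$, $b_1$, $b_2$ and $c$. The exponent $(b(a_1)-c)\, c^t \Delta^t$ lies in the bounded interval $[-M, M]$, so $e^{(b(a_1)-c)\, c^t \Delta^t}$ is bounded within $[e^{-M}, e^{M}]$, and the base $s_{1}^{t} + s_{2}^{t}\, e^{(b(a_1)-c)\, c^t \Delta^t}$ is either identically zero (in which case $s_1^t = s_2^t = 0$, so $f \equiv 0 \leq K$ holds trivially) or bounded below by $(s_1^t + s_2^t) e^{-M}$, a positive constant depending only on the initial population. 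Hence a finite Lipschitz constant $L$ can be read off from the chain rule applied to the product of these bounded exponentials and the fractional-exponent factors $(s_{1}^{t} + s_{2}^{t})^{e_1}$ and $(s_{1}^{t} + s_{2}^{t}\, e^{(b(a_1)-c)\, c^t \Delta^t})^{-e_1}$.

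Given that $f$ is $L$-Lipschitz in $c^t$, the constraint generation loop of SCIPPlan can produce at most $L/\gamma$ distinct violated values of $c^t$ per step before any further violation must lie within a $\gamma/L$-neighbourhood of a previously added cut, at which point the temporal constraint is satisfied to tolerance $\gamma$ at that step; summing over the $H$ steps bounds the total number of constraint generation iterations by $HL/\gamma$, after which SCIPPlan either returns a plan that is $\gamma$-feasible for $\Pi$ or proves the cut-augmented model, and hence $\Pi$, infeasible. The main obstacle is the explicit derivation of $L$: the nested exponentials combined with the fractional exponent $e_1 = b(a_1)/(b(a_1) - c)$ demand a careful case analysis on the sign of $b(a_1) - c$, but once the denominator has been eliminated as above the remaining bounds reduce to elementary estimates on products of bounded exponentials and polynomials over a compact domain.
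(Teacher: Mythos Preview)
Your proposal is correct but takes a genuinely different route from the paper. You argue by rewriting the left-hand side of constraint~(\ref{scip10_pand}) to cancel the $(s_1^t)^{\pm e_1}$ factors and then verifying Lipschitz continuity in $c^t$ directly via a derivative bound on the compact feasible region. The paper instead exploits a structural property of the SIR model: since $x' \le 0$ by equation~(\ref{sus}), the factor $(1+y^t/x^t)^{b/(b-c)}(1+(y^t/x^t)e^{(b-c)\Delta^t})^{-b/(b-c)}$, which equals $x^{t+1}/x^t$ by equation~(\ref{sus_sol}), lies in $[0,1]$; substituting this into equation~(\ref{inf_sol}) immediately gives the exponential majorant $y \le K\,e^{(b-c)\Delta^t}$, and this simple bound is then used as the control function in the original SCIPPlan termination argument in place of a Lipschitz estimate. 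The paper's route is shorter and sidesteps the most delicate step in your plan, namely the uniform positive lower bound on $s_1^t + s_2^t$ needed to control the $(s_1^t + s_2^t\,e^{e_2})^{-e_1}$ factor: your appeal to ``the initial population'' does not quite deliver this on its own, and you would need to invoke the monotonicity of $s_3$ together with the goal constraint $s_3^{H+1} \le pN$ (giving $s_1^t + s_2^t \ge (1-p)N$), or a compactness argument over the deterministic trajectories, to secure it. Your approach, conversely, is more portable: it does not hinge on the monotonicity of a particular state variable and would extend to other temporal constraints of similar analytic form, at the price of the sign case analysis on $b(a_1)-c$ that you already anticipate.
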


\begin{proof}
We begin our proof with the analysis of equation (\ref{inf_sol}) that 
governs the evolution of the number of infected individuals $y$. The 
joint analysis of equations (\ref{sus}), (\ref{sus_sol}) and 
(\ref{inf_sol}) reveals that the expression:
\begin{align}
&(1+\frac{y^{t}}{x^{t}})^{\frac{b}{b-c}} (1 + \frac{y^{t}}{x^{t}} 
e^{(b-c)\Delta^t})^{-\frac{b}{b-c}}\label{theor_1}
\end{align}
is bounded between 0 and 1. This means that the growth of the number 
of infected individuals $y$ must be bounded by the expression:
\begin{align}
&K e^{(b-c)\Delta^t}\label{theor_2}
\end{align}
which concludes our proof since expression (\ref{theor_2}) can 
be used as the control function in the analysis of the temporal 
constraint function $C^{T}$ instead of the Lipschitz function 
that was used in the original proof.~\cite{Say2023a}
\end{proof}

\section{Related Problems and Extensions}

In this section, we introduce a modification of the pandemic planning 
problem that frequently appears in many other areas of social physics, 
and also show how to extend the pandemic planning problem to use variable 
step duration. We begin with the presentation of a modification of the 
pandemic planning problem that frequently appears in other facets of social 
physics, such as migration, networks and communities.~\cite{Jusup2022} 
In this new related planning problem, the total cost is primarily made out 
of two components, namely: (i) the cost associated with the total duration 
(i.e., makespan) and (ii) the total action duration.
In order to solve this new problem, we modify the goal constraint 
(\ref{scip12_pand}) to:
\begin{align}
&s^{H+1}_{3} \geq q N\label{scip_alternative}
\end{align}
where $q\in [0,1]$ is the minimum fraction $p\in [0,1]$ of the population 
that must reach a certain threshold, and run SCIPPlan with increasing values 
of horizon $H$. For simplicity, we will refer to the previous version of the 
planning problem as Problem 1, and refer to the new version as Problem 2.

We proceed with the extension of the pandemic planning problem that uses 
variable step duration. In order to achieve variable step duration, we modify 
the reward function to:
\begin{align}
&\max_{\substack{a^{1}_{1}, \dots, a^{H}_{1} \\ \Delta^{1}, \dots, \Delta^{H}}}
\sum_{t=1}^{H}-\Delta^{t} a^{t}_1\label{scip18_pand}
\end{align}
and constraint (\ref{scip17_pand}) to:
\begin{align}
&\delta_{LB} \leq \Delta^{t} \leq \delta_{UB} \quad \forall_{t\in \{1, \dots, H\}}\label{scip19_pand}
\end{align}
where $\delta_{LB}$ and $\delta_{UB}$ denote the lower and upper 
bounds on the duration of each step, respectively. Finally, in order 
to cover a fixed total duration, we add the following constraint:
\begin{align}
&\sum_{t=1}^{H}\Delta^{t} = F\label{scip20_pand}
\end{align}
where $F$ denotes the fixed total duration.

\section{Experimental Results}
\label{sec:experiments}

In this section, we present the results of our detailed computational 
experiments for testing the effectiveness of using SCIPPlan to perform 
planning under various problem settings.

\subsection{Design of Experiments}

We have conducted two sets of computational experiments. First, we tested 
SCIPPlan under both problem settings (i.e., Problem 1 and 2) over 96 
instances where each instance corresponds to a unique combination of the 
parameters provided in Table~\ref{tab:params}. Second, we tested the 
effect of using variable step duration on the solution quality and the 
runtime performance of SCIPPlan.

\begin{table}
\scriptsize
  \centering
    \begin{tabular}{| c | c | c |}
    \hline
    Parameter & Values & Brief Description \\ \hline
    $b_1$ & 0.2, 0.25 & Infection rate without a lockdown. \\ \hline
    $b_2$ & 0.1, 0.15 & Infection rate with a lockdown. \\ \hline
    $c$ & 0.15, 0.2 & Removal rate. \\ \hline
    $N$ & 5000 & Total population size. \\ \hline
    $K$ & 200, 250 & Maximum infected allowed. \\ \hline
    $I$ & 30, 40, 50, 60 & Initial infected. \\ \hline
    $p$ & 0.2 & Maximum fraction removed. \\ \hline
    $q$ & 0.8 & Minimum fraction removed. \\ \hline
    $\delta$ & 14, 21, 28 & Fixed duration of a step. \\ \hline
    $[\delta_{LB},\delta_{UB}]$ & [7,28] & Variable duration of a step. \\ \hline
    $F$ & 168 & Fixed total duration. \\ \hline
  \end{tabular}
  \caption{Summary of the pandemic parameters that are used to create the instances.} 
  \label{tab:params}
\end{table}

\subsection{Experimental Setup}

All experiments were run on the Apple M1 Chip with 16GB memory, using 
a single thread with one hour total time limit per instance. SCIPPlan 
uses SCIP~\cite{Vigerske2018} as its spatial branch-and-bound solver.

\subsection{Implementation Details}

In all of our experiments, we linearized equation (\ref{b_new}) using 
a system of linear constraints. We added the new linearization constraints 
as a part of the instantaneous constraint $C^{I}(s^{t}_{1}, s^{t}_{2}, 
s^{t}_{3}, a^{t}_{1}, \Delta^{t}) \leq 0$. We encoded each exponential 
term of the form $a^b$ as $e^{b log(a)}$. We verified the temporal 
constraint function $C^{T}$ with increments of $0.1$.

In order to solve Problem 1 with fixed step duration, we ran SCIPPlan 
with the value of $H$ such that $H\Delta$ is equal to 24 weeks. In order 
to solve Problem 2 with fixed step duration, we ran SCIPPlan with increasing 
values of horizon $H$ until either a solution is found or $H\Delta$ exceeded 
52 weeks. If SCIPPlan returned a solution with $H\Delta$ that is less than 52 
weeks, we verified that the temporal constraint $C^{T}(s^{t}_{1}, s^{t}_{2}, 
s^{t}_{3}, a^{t}_{1}, \Delta^{t}) \leq 0$ holds for the remaining 
duration of the year without using additional actions (i.e., $a_1^t = 0$), 
otherwise we ran SCIPPlan with the incremented value of $H$. Finally, 
in order to solve Problem 1 with variable step duration, we have used 
$\delta_{LB} = 7$ and $\delta_{UB} = 28$ as the lower and upper bounds 
of the duration of each step with horizon $H=8$ and fixed total duration $F=168$  (i.e., 24 weeks). 




\subsection{Effect of Valid Inequalities}

We begin our analysis with the evaluation of the effect of valid 
inequalities that are introduced in the Valid Inequalities section 
(i.e., constraints (\ref{scip7_pand}-\ref{scip9_pand})) on the 
overall runtime performance of SCIPPlan. 
Figure~\ref{fig:pairwise_timing} visualizes the logarithmic runtime 
comparison between using SCIPPlan with and without the valid 
inequalities over all instances. In figure~\ref{fig:pairwise_timing}, 
each data point represents an instance that corresponds to a unique 
combination of the parameters provided in Table~\ref{tab:params}. The 
inspection of figure~\ref{fig:pairwise_timing} clearly highlights 
the benefit of including the valid inequalities as a part of the 
planning problems. On average, we observe that the addition 
of the valid inequalities improves the runtime performance of 
SCIPlan by around one to two orders of magnitude. As a result, 
we report results on the version of the planning problems 
with the valid inequalities in the remaining of this section.

\begin{figure}
\centering
\includegraphics[width=\linewidth]{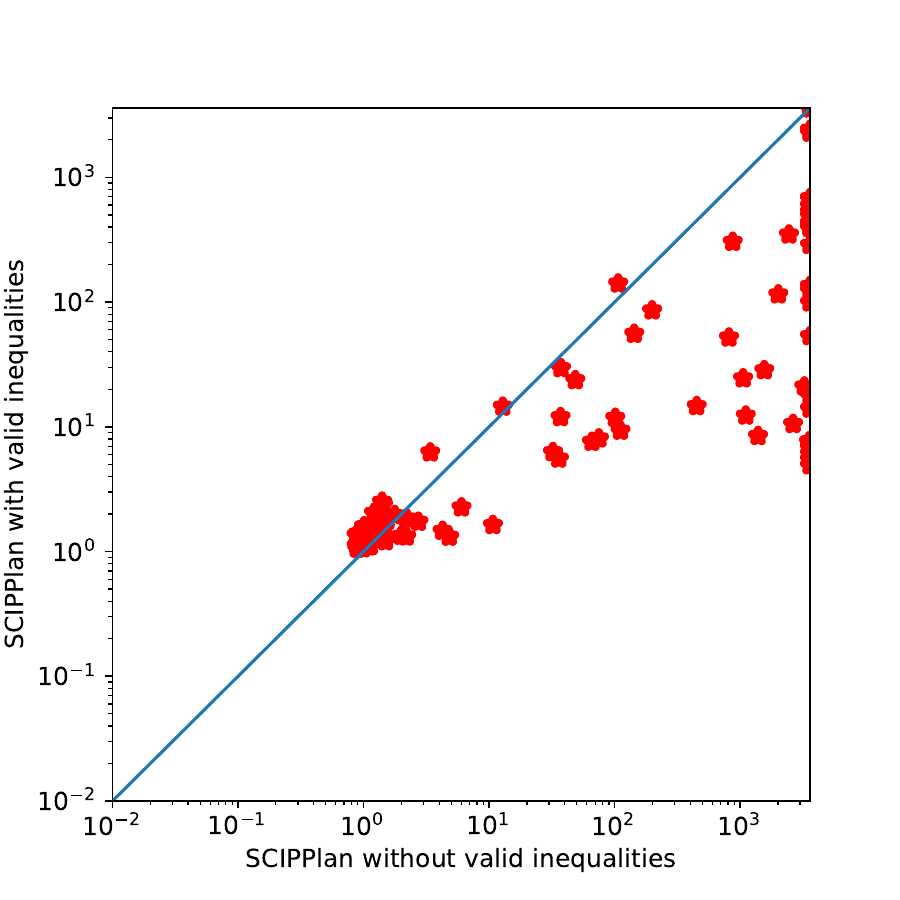}
\caption{Visualization of the effect of valid inequalities  
on the logarithmic runtime performance of SCIPPlan in seconds 
where each data point corresponds to an instance. The addition 
of the valid inequalities improves the runtime performance of 
SCIPlan by around one to two orders of magnitude on average.
}
\label{fig:pairwise_timing}
\end{figure}

\subsection{Problem Coverage}

We proceed with the analysis of our experiments with an overview of the 
results. Table~\ref{tab:coverage} summarizes the effect of using SCIPPlan 
to perform planning over both settings with different values of 
the parameters. Specifically, Table~\ref{tab:coverage} groups the 
total number of instances solved within the time limit for each problem 
setting and the values of parameters $b_1$, $b_2$, $c$ and $K$, over 
parameters $H$ and $I$. Overall, we observe that SCIPPlan had the highest 
success of covering the instances of Problem 1 (i.e., covering all instances). 
In Problem 2, we observe that SCIPPlan had the highest success of covering 
the instances with the parameter values $b_1=0.25$, $b_2=0.15$ and $c=0.2$ 
(i.e., 23 out of 24 instances), medium success with the parameter values 
$b_1=0.20$, $b_2=0.10$, $c=0.15$, $K=250$ (i.e., 9 out of 12 instances), 
and the lowest success with the parameter values $b_1=0.20$, $b_2=0.10$, 
$c=0.15$, $K=200$ (i.e., 4 out of 12 instances). Next, we will inspect both the runtime performance and the normalized 
solution quality of SCIPPlan over both problem settings.

\begin{table}
\scriptsize
  \centering
    \begin{tabular}{| c | c | c |}
    \hline
    Problem & Parameters & Coverage \\ \hline
    1 & $b_1=0.20$, $b_2=0.10$, $c=0.15$, $K=200$ & 12/12 \\ \hline
    1 & $b_1=0.20$, $b_2=0.10$, $c=0.15$, $K=250$ & 12/12 \\ \hline
    1 & $b_1=0.25$, $b_2=0.15$, $c=0.2$, $K=200$ & 12/12 \\ \hline
    1 & $b_1=0.25$, $b_2=0.15$, $c=0.2$, $K=250$ & 12/12 \\ \hline
    2 & $b_1=0.20$, $b_2=0.10$, $c=0.15$, $K=200$ & 4/12 \\ \hline
    2 & $b_1=0.20$, $b_2=0.10$, $c=0.15$, $K=250$ & 9/12 \\ \hline
    2 & $b_1=0.25$, $b_2=0.15$, $c=0.2$, $K=200$ & 11/12 \\ \hline
    2 & $b_1=0.25$, $b_2=0.15$, $c=0.2$, $K=250$ & 12/12 \\ \hline
    \multicolumn{2}{| c |}{Total} & 84/96 \\ \hline
  \end{tabular}
  \caption{Summary of the problem coverage grouped 
by the problem settings and the values of parameters.}
  \label{tab:coverage}
\end{table}

\subsection{Runtime Performance}

The runtime performance of SCIPPlan over all unique instances 
is visualized by figure~\ref{fig:runtime} using heatmaps. The 
inspection of figure~\ref{fig:runtime} for Problem 1 reveals 
that decreasing the value of parameter $\delta$ typically 
decreases the runtime performance of SCIPPlan, since the lower 
values of $\delta$ correspond higher values of horizon $H$. The 
inspection of figure~\ref{fig:runtime} for Problem 2 reveals 
that decreasing the values of parameters $\delta$ and $K$ can 
significantly decrease the runtime performance of SCIPPlan, since 
the lower values of $\delta$ and $K$ typically correspond to the 
existence of solutions with higher values of horizon $H$. 
Overall, we have 
not found a significant effect of the value of parameter $I$ on 
the runtime performance of SCIPPlan.

\begin{figure}  
    \centering
    \includegraphics[width=.49\linewidth]{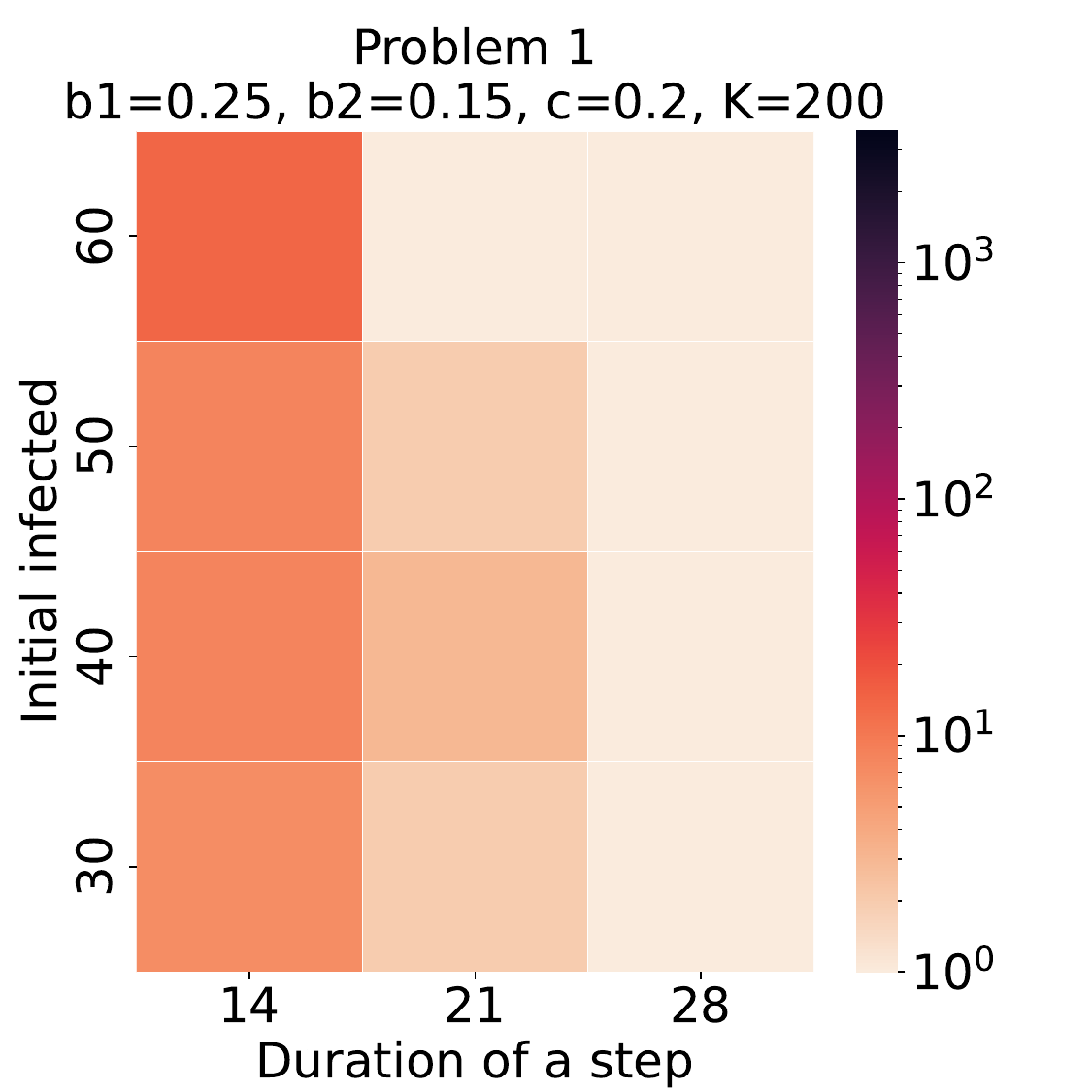}
    \label{fig:runtime1_0.25_0.15_0.2_200}
    \centering
    \includegraphics[width=.49\linewidth]{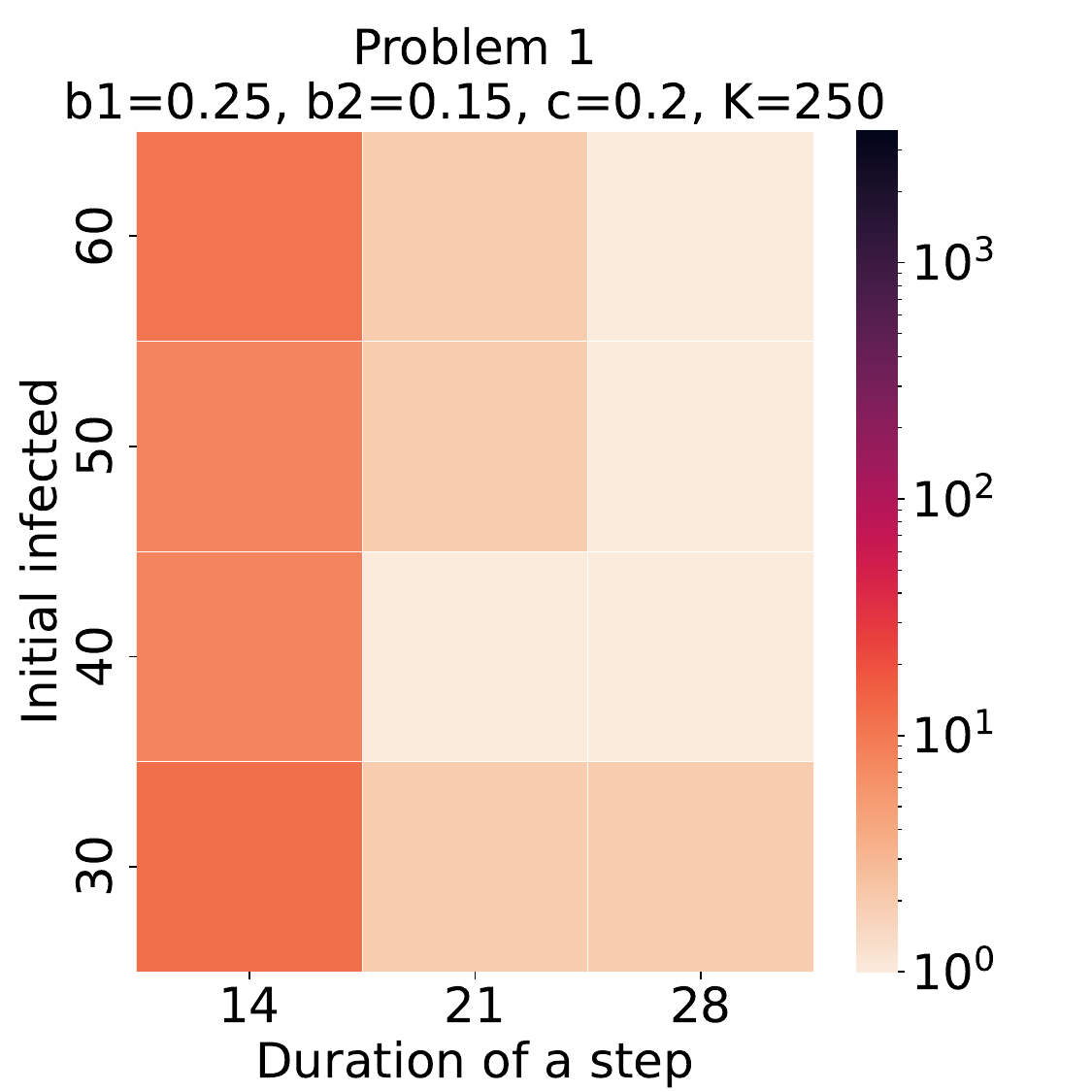}
    \label{fig:runtime1_0.25_0.15_0.2_250}

    \centering
    \includegraphics[width=.49\linewidth]{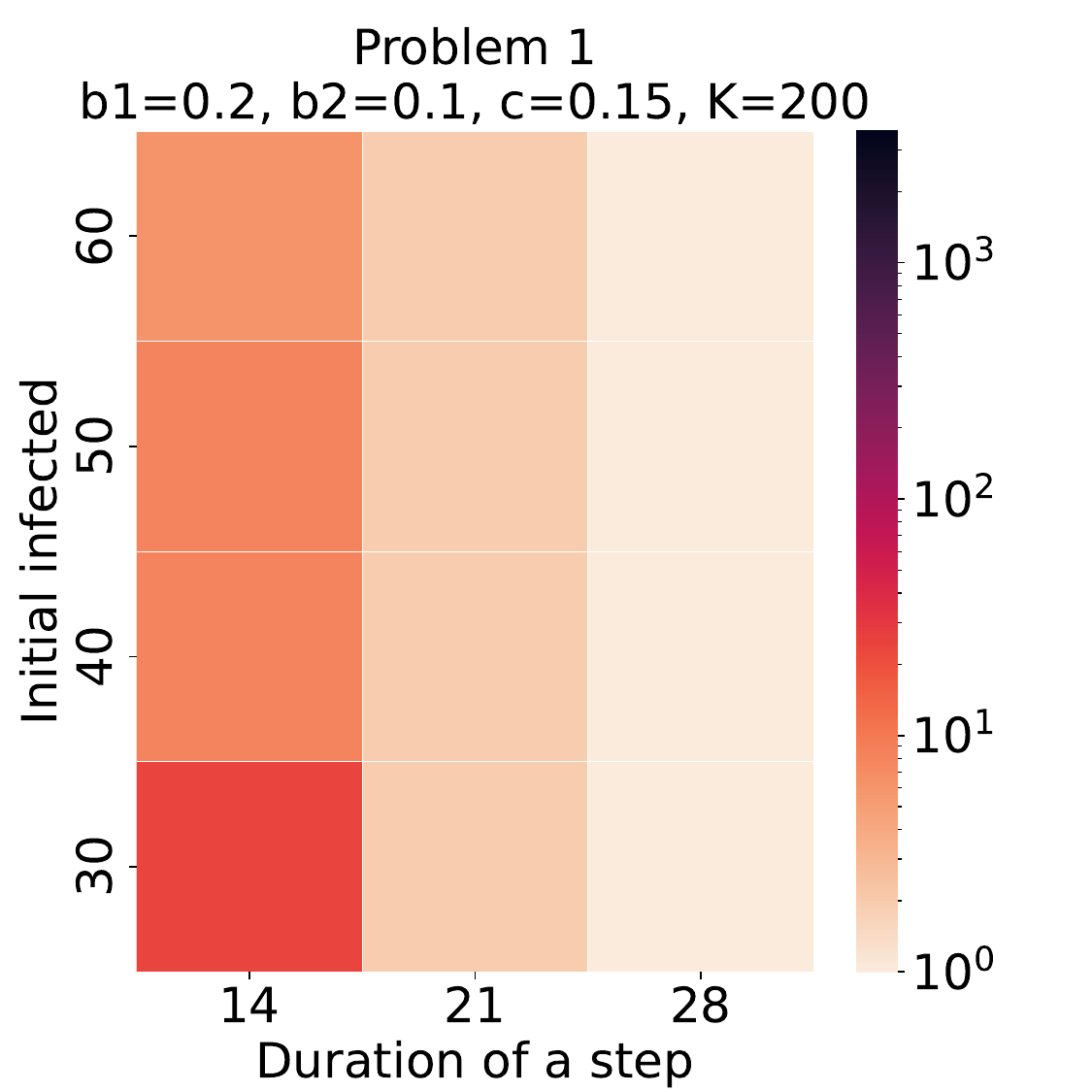}
    \label{fig:runtime1_0.2_0.1_0.15_200}
    \centering
    \includegraphics[width=.49\linewidth]{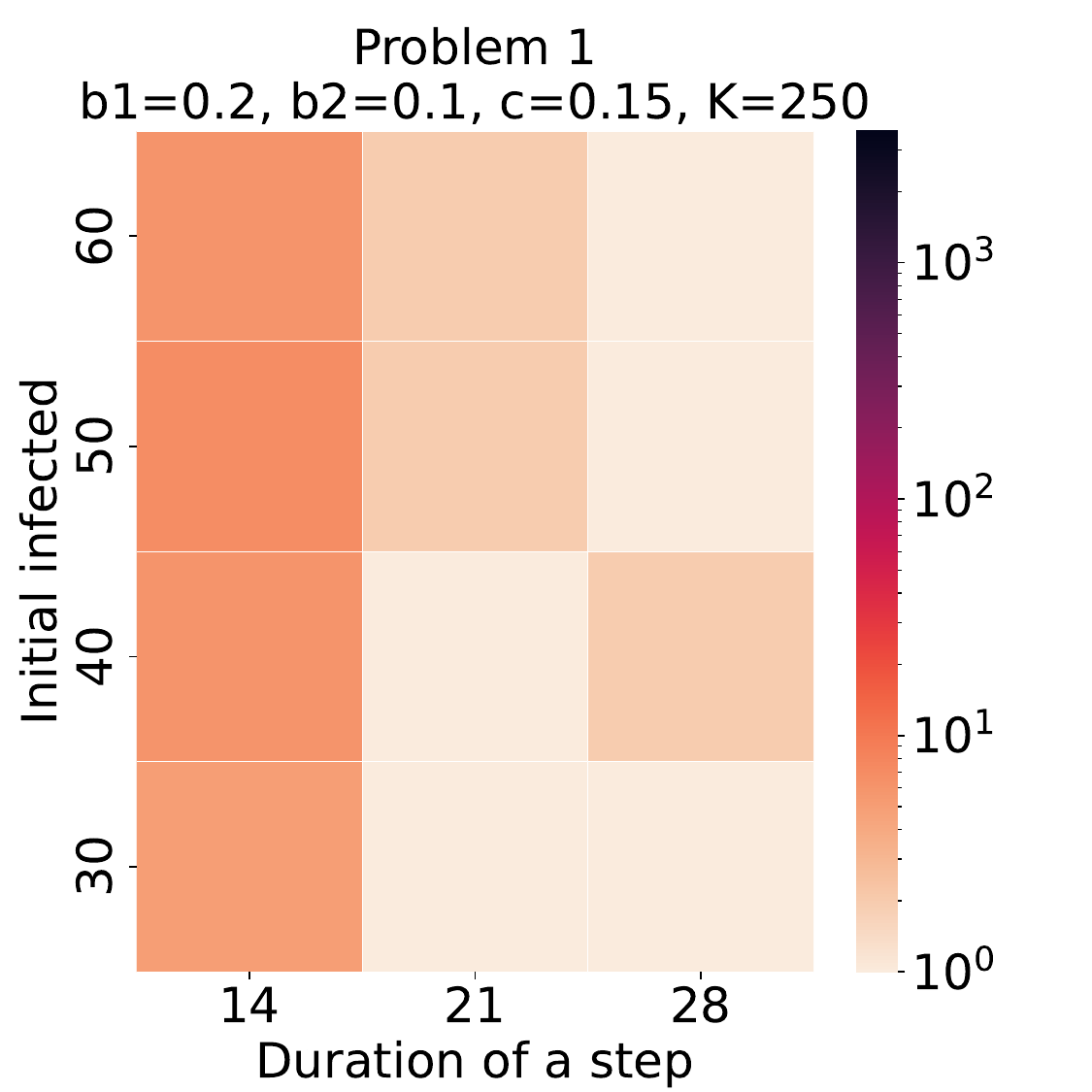}
    \label{fig:runtime1_0.2_0.1_0.15_250}

    \centering
    \includegraphics[width=.49\linewidth]{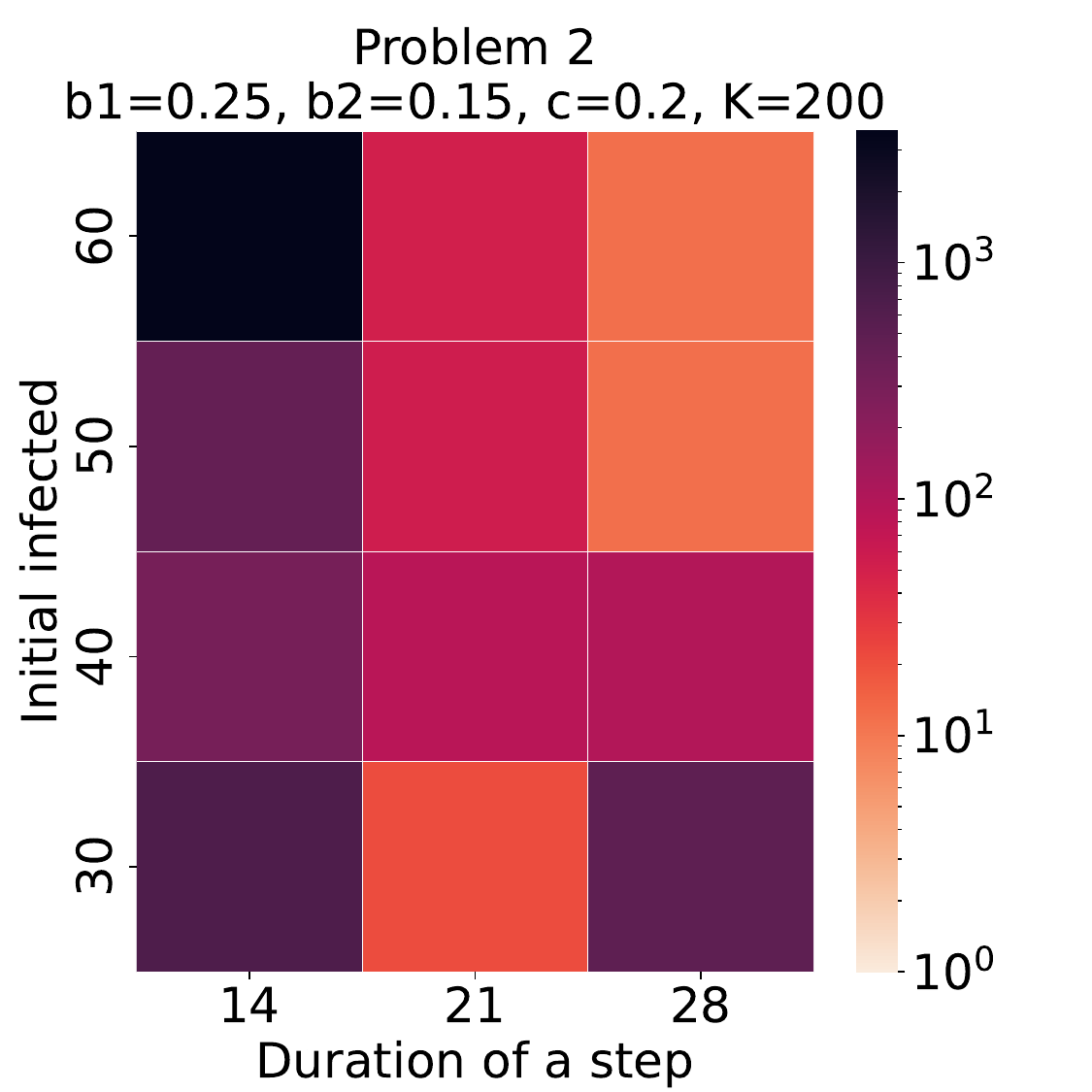}
    \label{fig:runtime2_0.25_0.15_0.2_200}
    \centering
    \includegraphics[width=.49\linewidth]{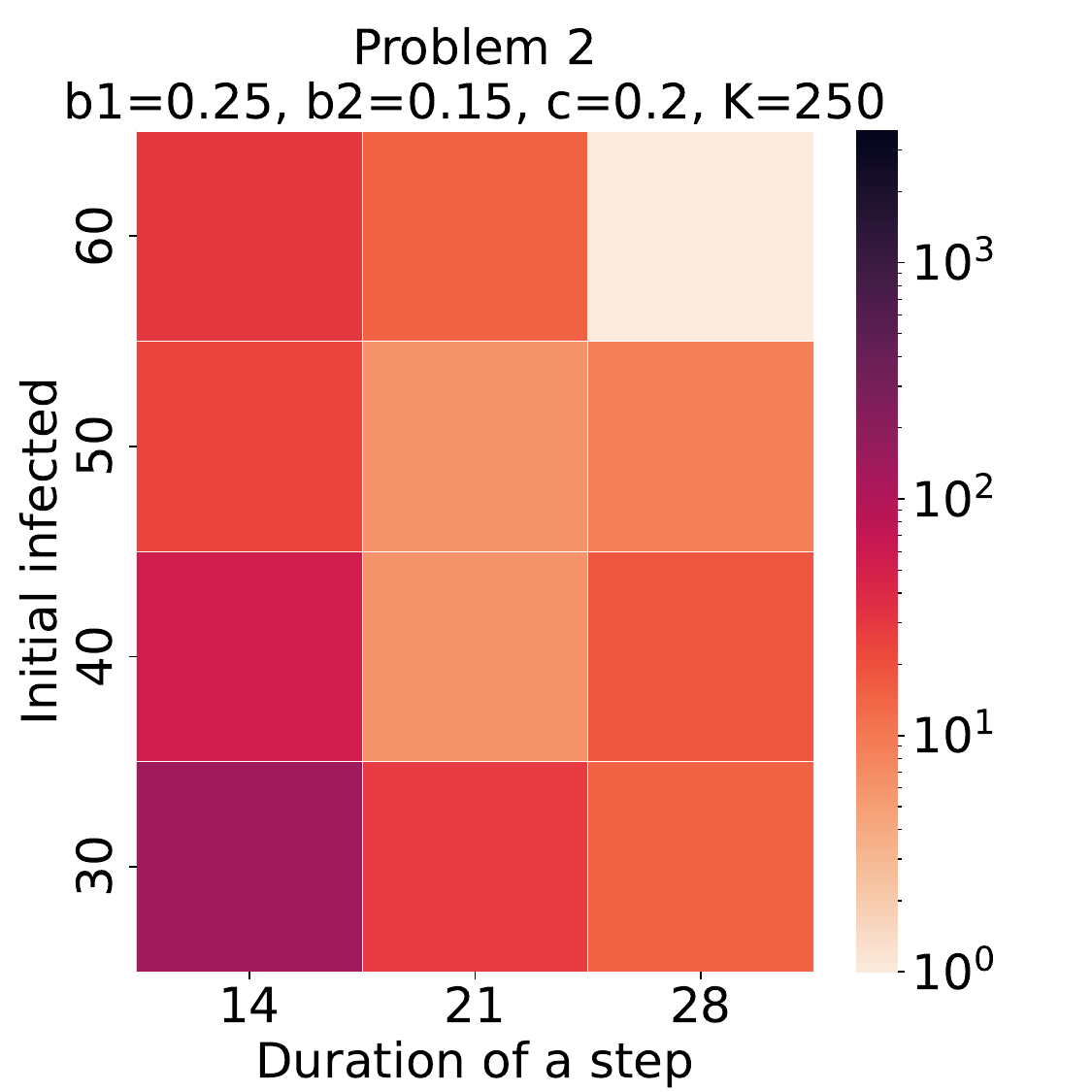}
    \label{fig:runtime2_0.25_0.15_0.2_250}

    \centering
    \includegraphics[width=.49\linewidth]{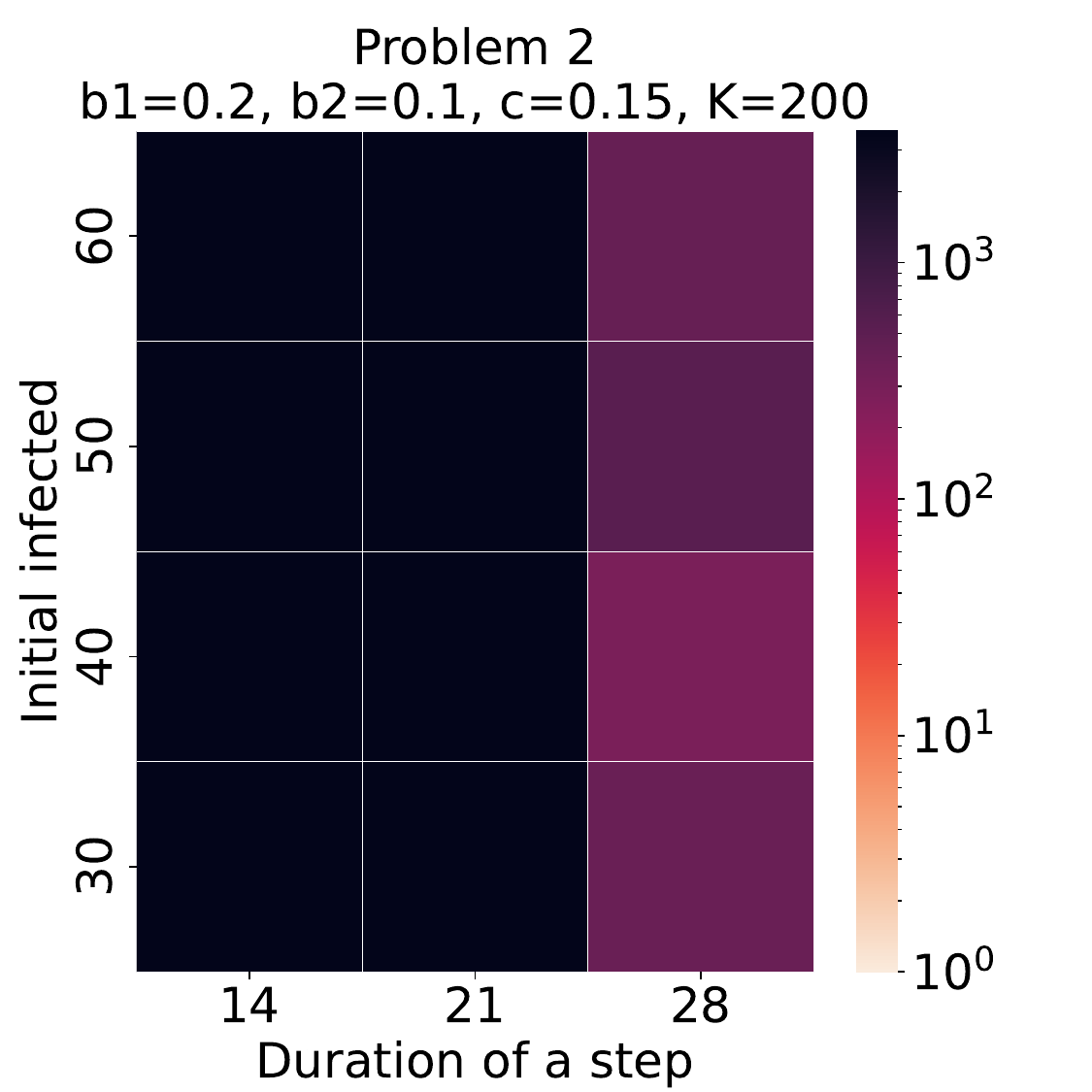}
    \label{fig:runtime2_0.2_0.1_0.15_200}
    \centering
    \includegraphics[width=.49\linewidth]{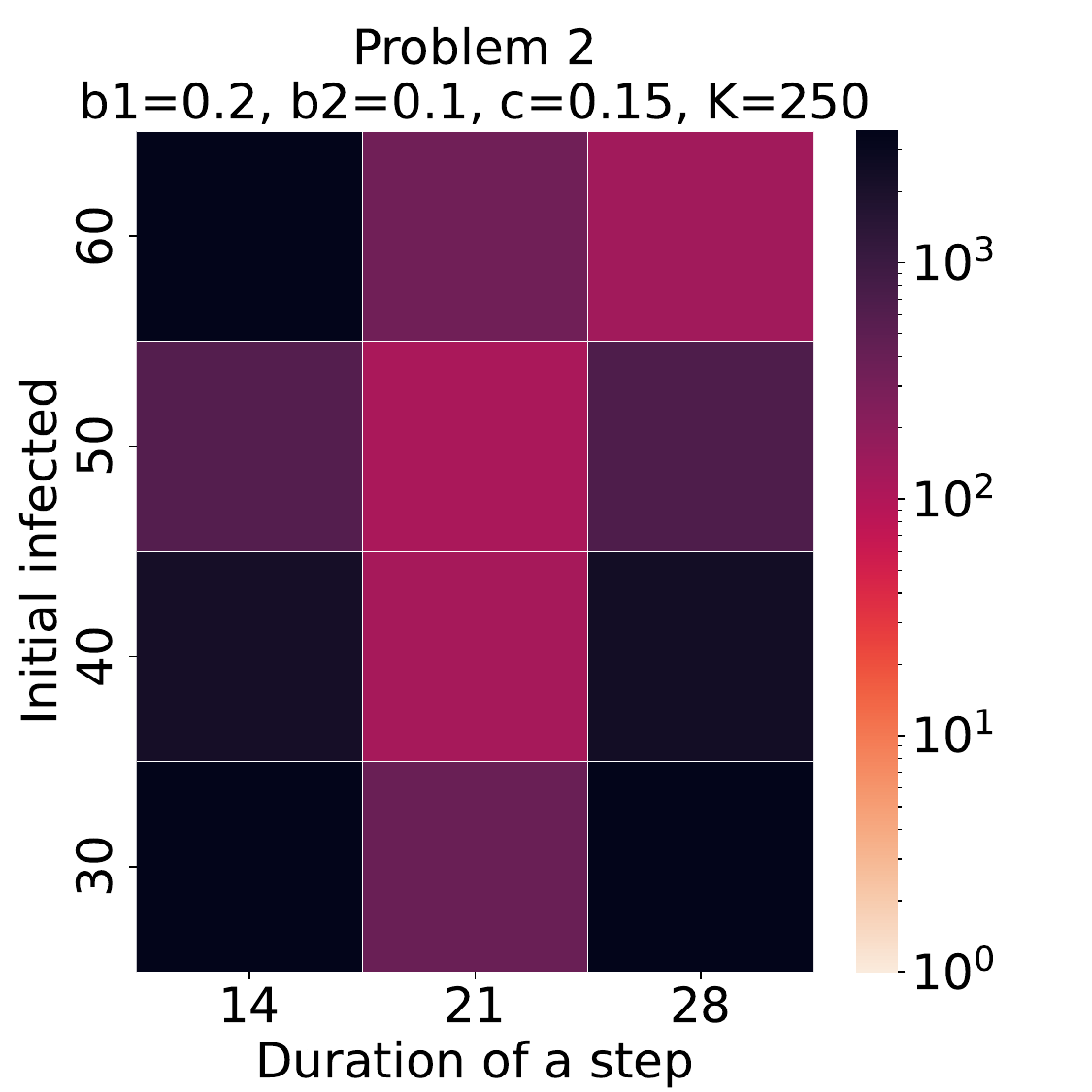}
    \label{fig:runtime2_0.2_0.1_0.15_250}
\caption{Visualization of logarithmic runtime performance in seconds 
for both problem settings. Overall, decreasing the value of parameter 
$\delta$ typically decreases the runtime performance of SCIPPlan, 
since the lower values of $\delta$ correspond higher values of horizon $H$.}
\label{fig:runtime}
\end{figure}

\subsection{Normalized Solution Quality}

The normalized solution quality of SCIPPlan over all unique instances, 
that is defined as the total action duration 
$\delta \sum_{t=1}^{H} a^t_1$, is visualized by figure~\ref{fig:quality}. 
The inspection of figure~\ref{fig:quality} highlights the clear benefit 
of using smaller values of parameter $\delta$ which allows for a more 
granular control of the population. In Problem 2, decreasing the 
value of $\delta$ by one week decreases the total action duration by around 
26 days (i.e., on average). Moreover, we observed that 
increasing the value of parameter $K$ can significantly decrease the total 
action duration. Overall, we have not found a significant effect of the 
value of parameter $I$ on the total action duration.

\begin{figure}
    \centering
    \includegraphics[width=.49\linewidth]{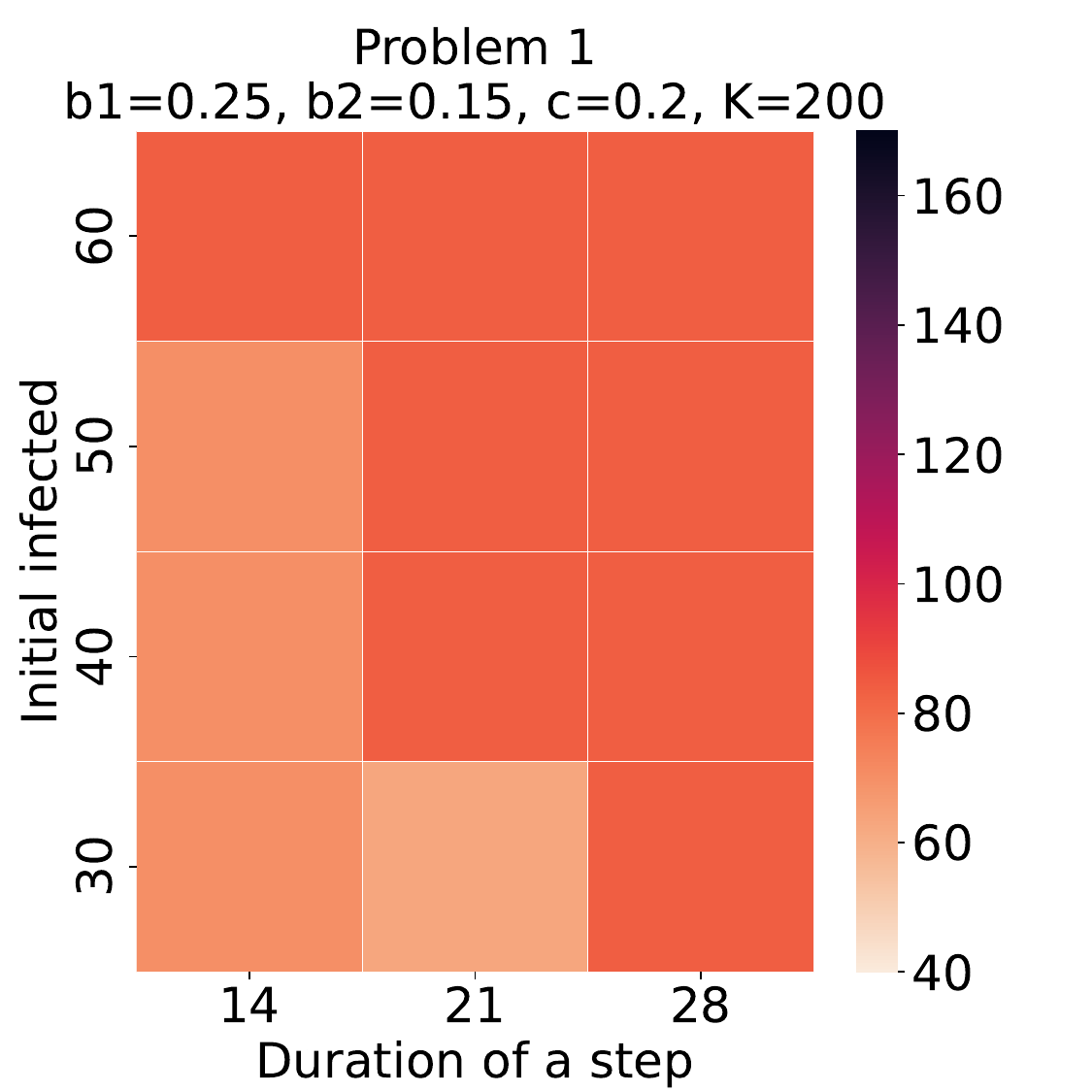}
    \label{fig:quality1_0.25_0.15_0.2_200}
    \centering
    \includegraphics[width=.49\linewidth]{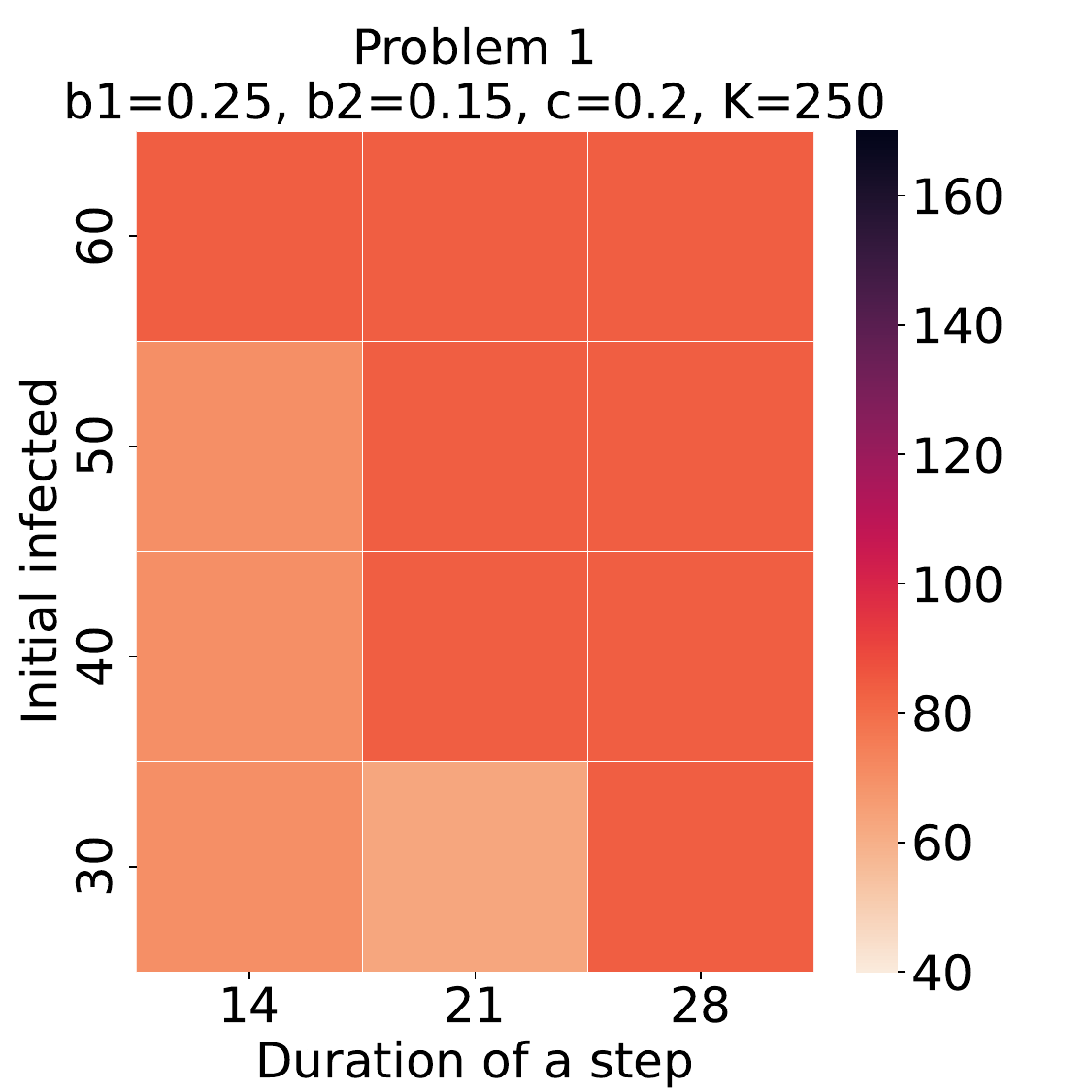}
    \label{fig:quality1_0.25_0.15_0.2_250}

    \centering
    \includegraphics[width=.49\linewidth]{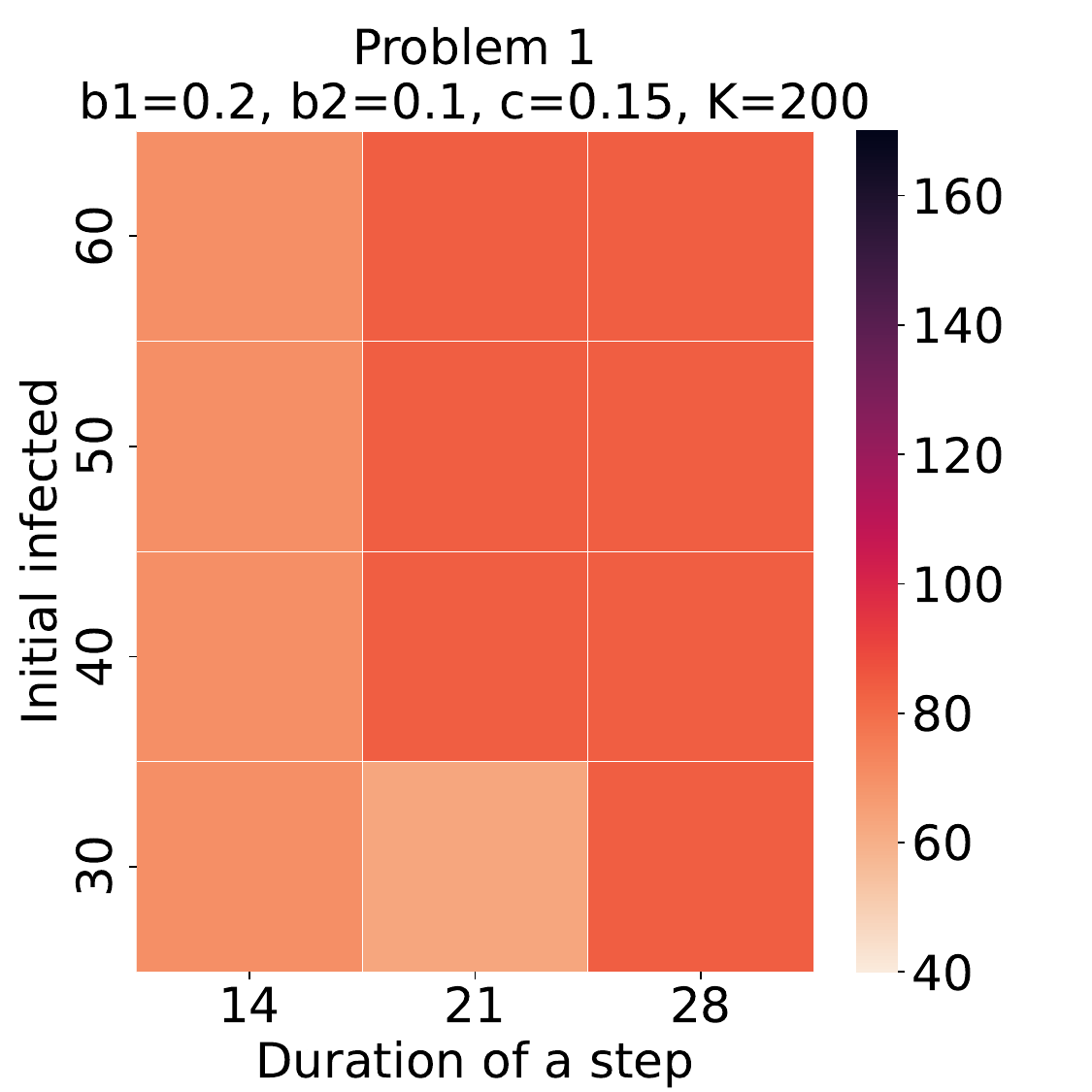}
    \label{fig:quality1_0.2_0.1_0.15_200}
    \centering
    \includegraphics[width=.49\linewidth]{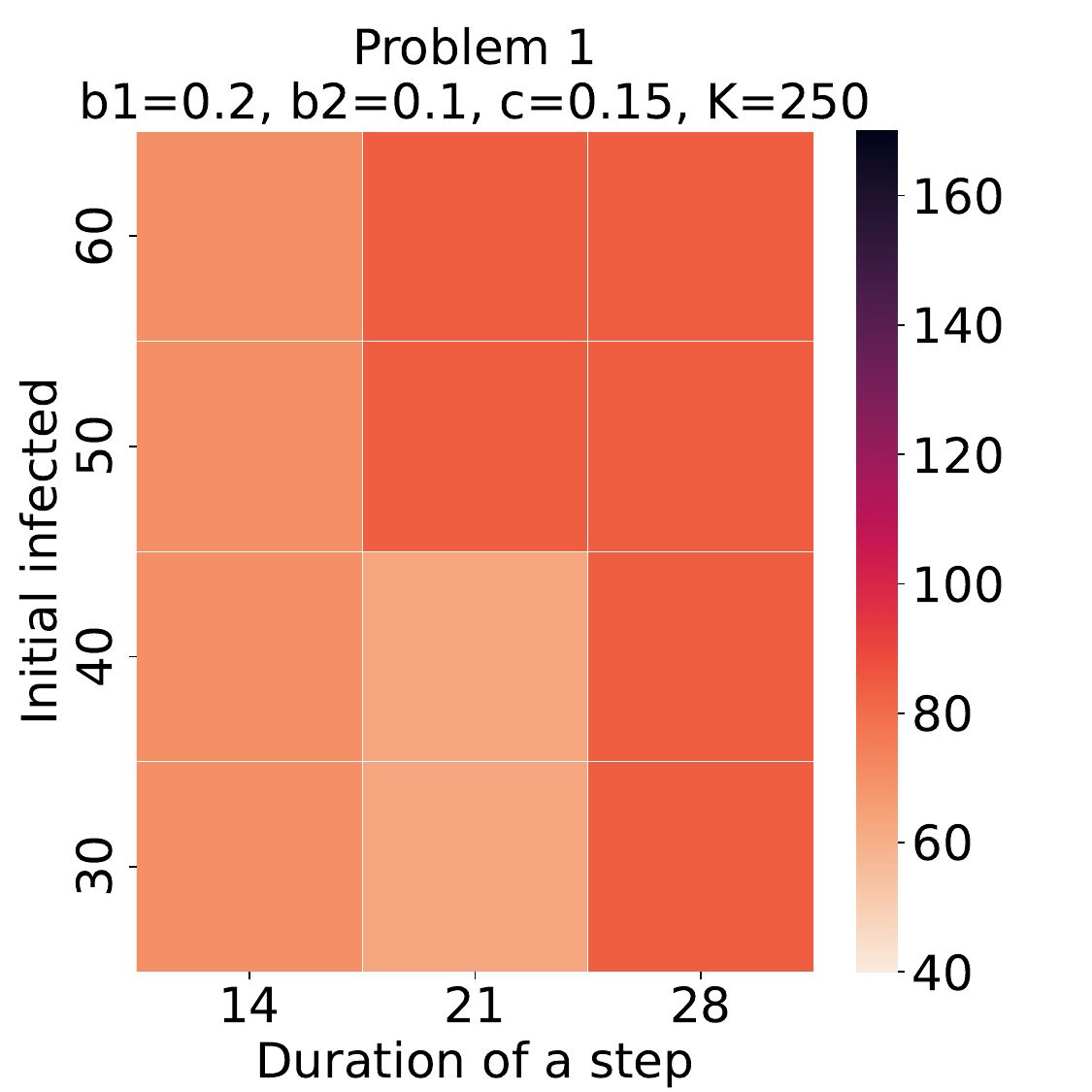}
    \label{fig:quality1_0.2_0.1_0.15_250}

    \centering
    \includegraphics[width=.49\linewidth]{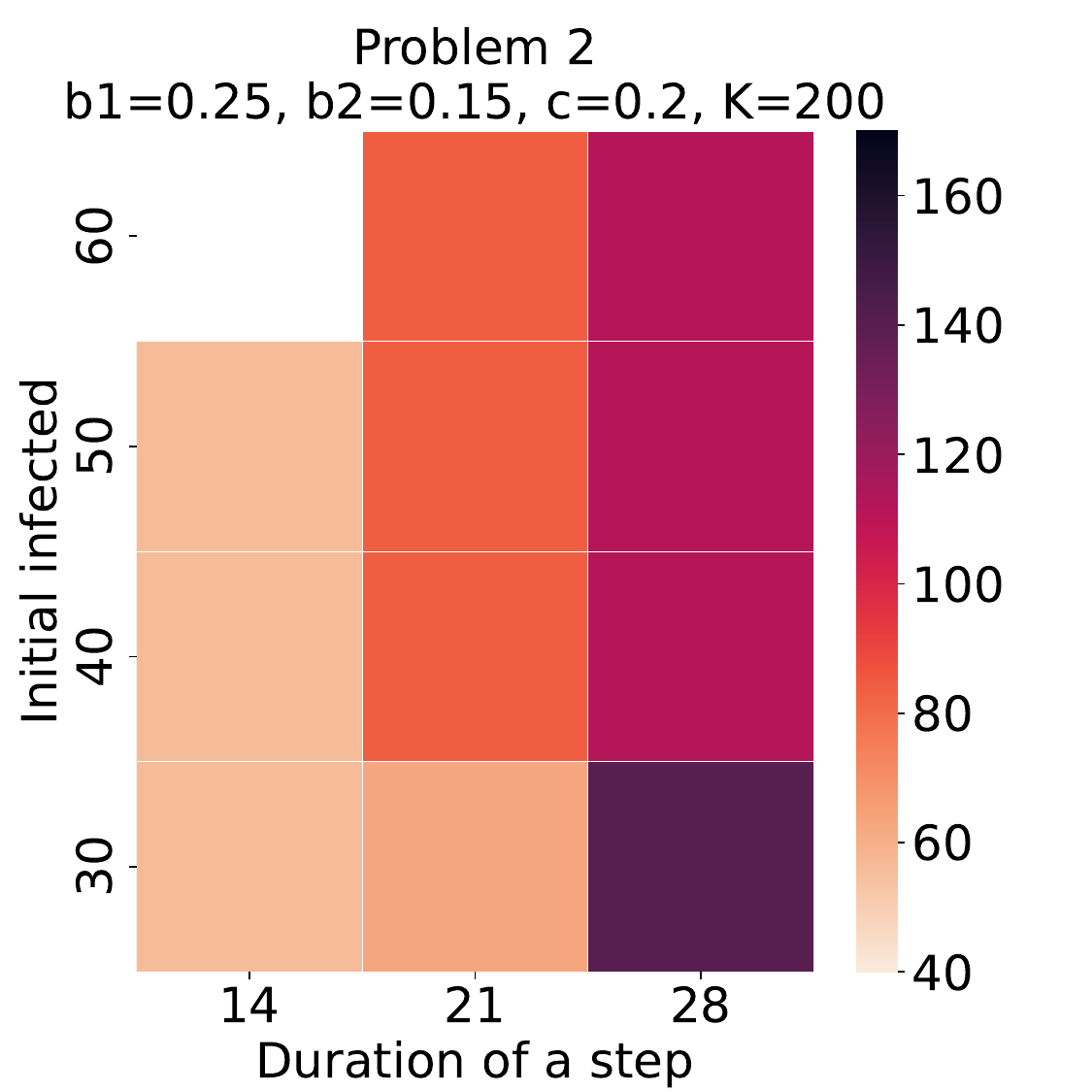}
    \label{fig:quality2_0.25_0.15_0.2_200}
    \centering
    \includegraphics[width=.49\linewidth]{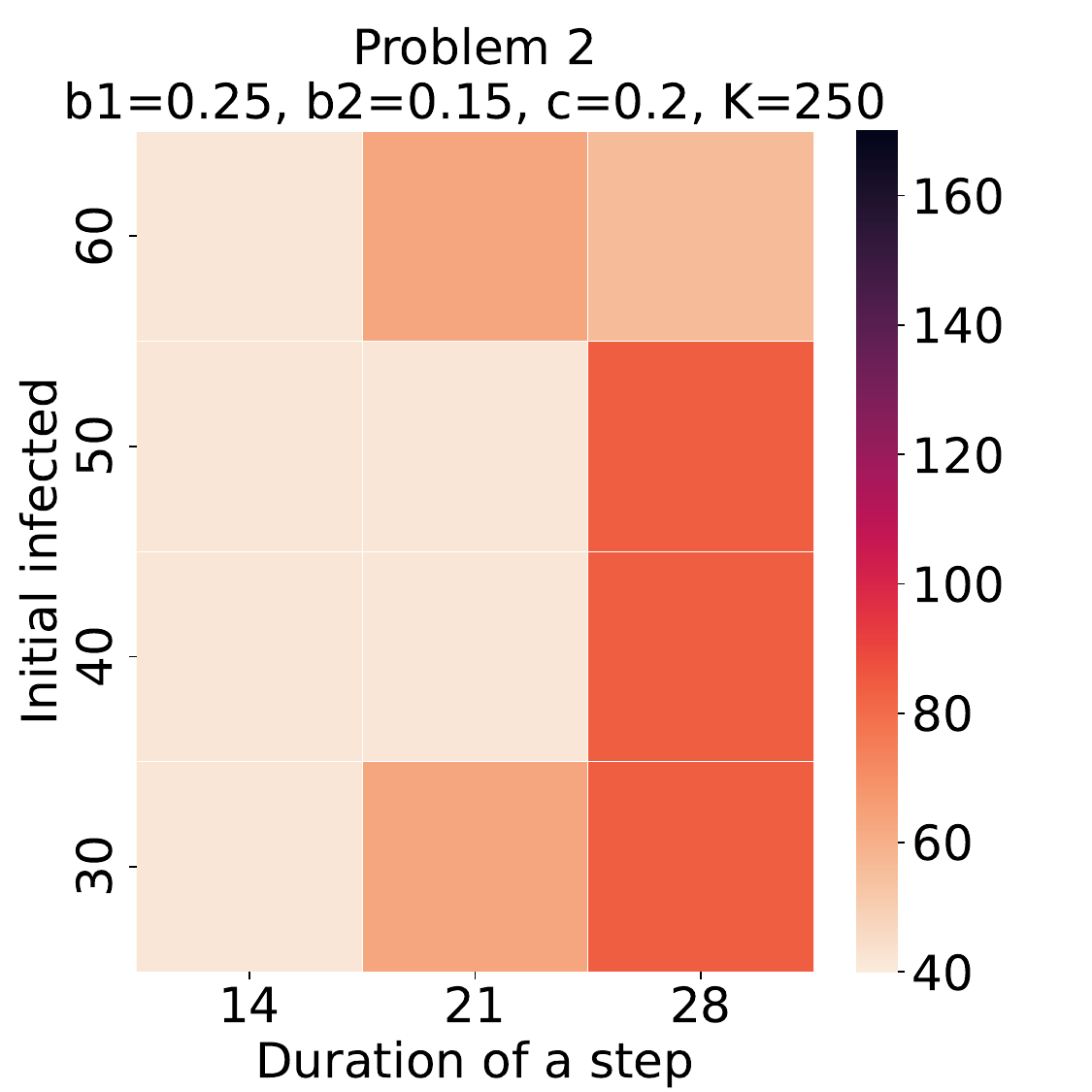}
    \label{fig:quality2_0.25_0.15_0.2_250}

    \centering
    \includegraphics[width=.49\linewidth]{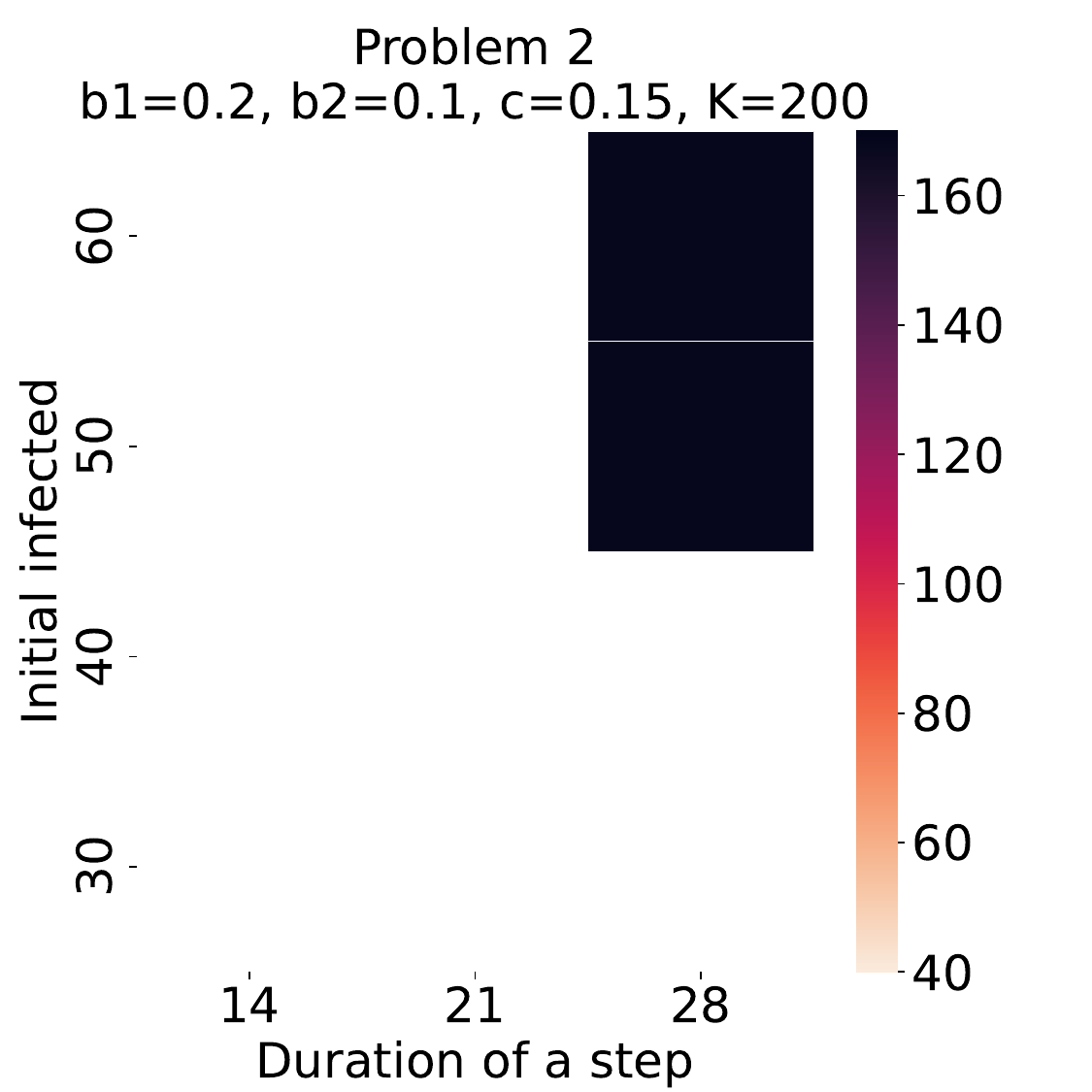}
    \label{fig:quality2_0.2_0.1_0.15_200}
    \centering
    \includegraphics[width=.49\linewidth]{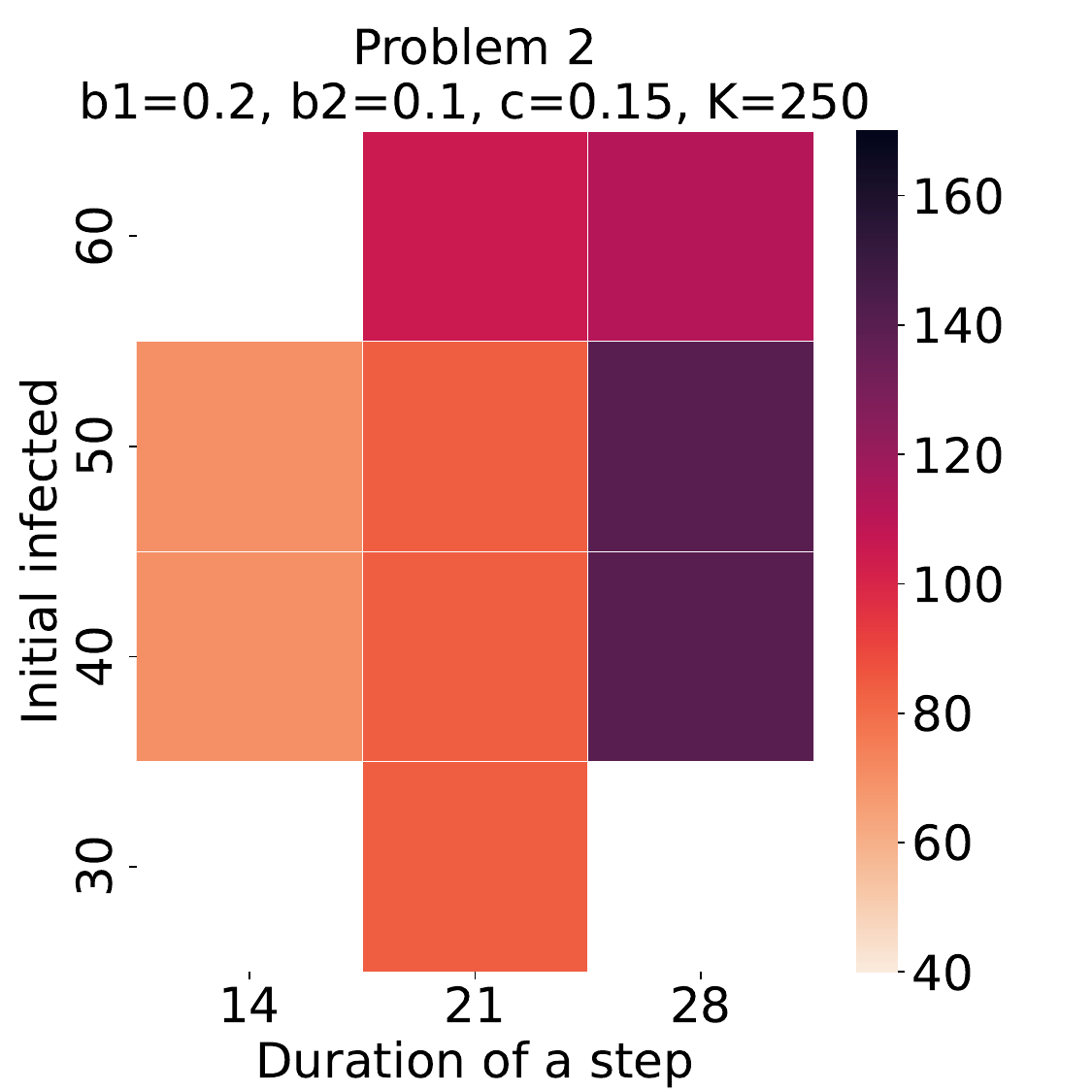}
    \label{fig:quality2_0.2_0.1_0.15_250}
\caption{Visualization of total action durations in days for both problem settings. 
Overall, using smaller values of parameter $\delta$ increases the solution quality 
and allows for a more granular control of the population.}
\label{fig:quality}
\end{figure}

\subsection{Effect of Using Variable Step Duration}

In this section, we analyze the effect of using variable step 
duration over fixed step duration on the runtime performance 
and the solution quality of SCIPPlan. In order to achieve this, 
we ran SCIPPlan with both fixed step duration and variable step 
duration over all instances of Problem 1.

Figure~\ref{fig:quality_2} visualizes the total action duration 
$\sum_{t=1}^{H} \Delta^t a^t_1$ comparison between using variable 
step duration and fixed step duration. The inspection of the 
figures highlights the clear benefit of using variable step 
duration over fixed step duration where SCIPPlan finds solutions 
with higher solution quality (i.e., lower total action duration).
However, we have found that the increase in solution quality 
comes at a price. Figure~\ref{fig:pairwise_timing_2} visualizes 
the logarithmic runtime comparison between using variable step 
duration and fixed step duration. On average, we observe that 
the use of variable step size decreases the runtime performance 
of SCIPlan by around two orders of magnitude, which highlights 
the additional computational resources required to improve the 
solution quality using variable step duration.

\begin{figure}
    \centering
    \includegraphics[width=.49\linewidth]{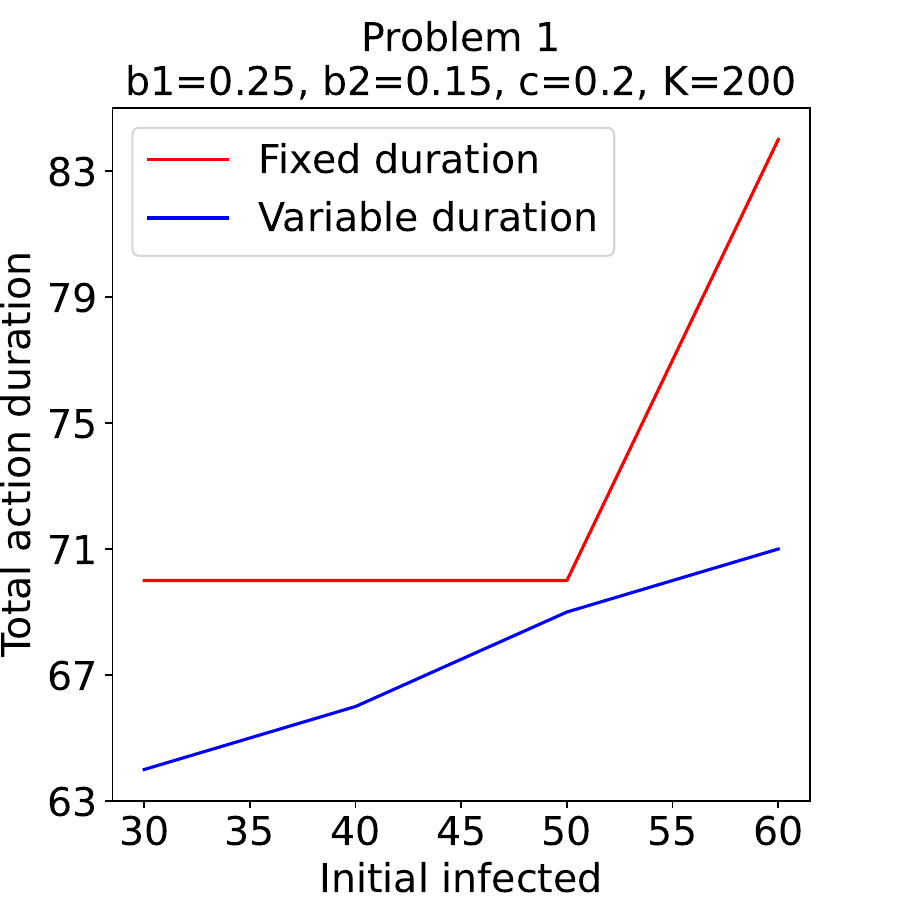}
    \label{fig:pairwise_reward_2}
    \centering
    \includegraphics[width=.49\linewidth]{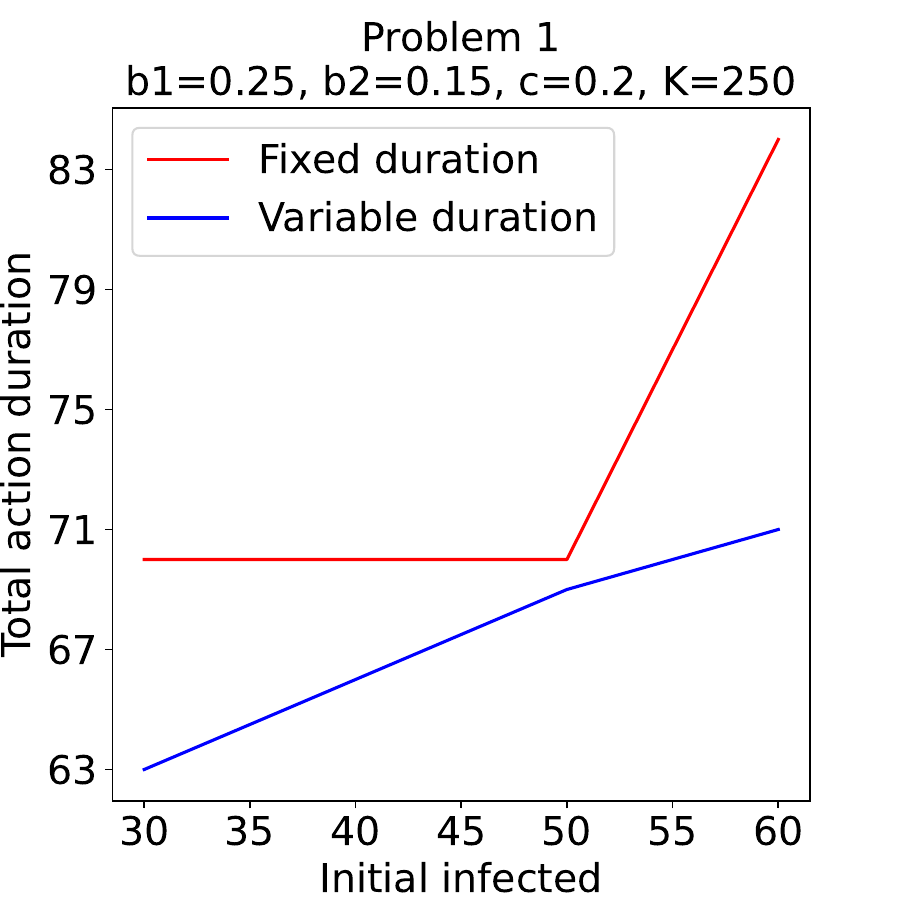}
    \label{fig:pairwise_reward_4}
    
    \centering
    \includegraphics[width=.49\linewidth]{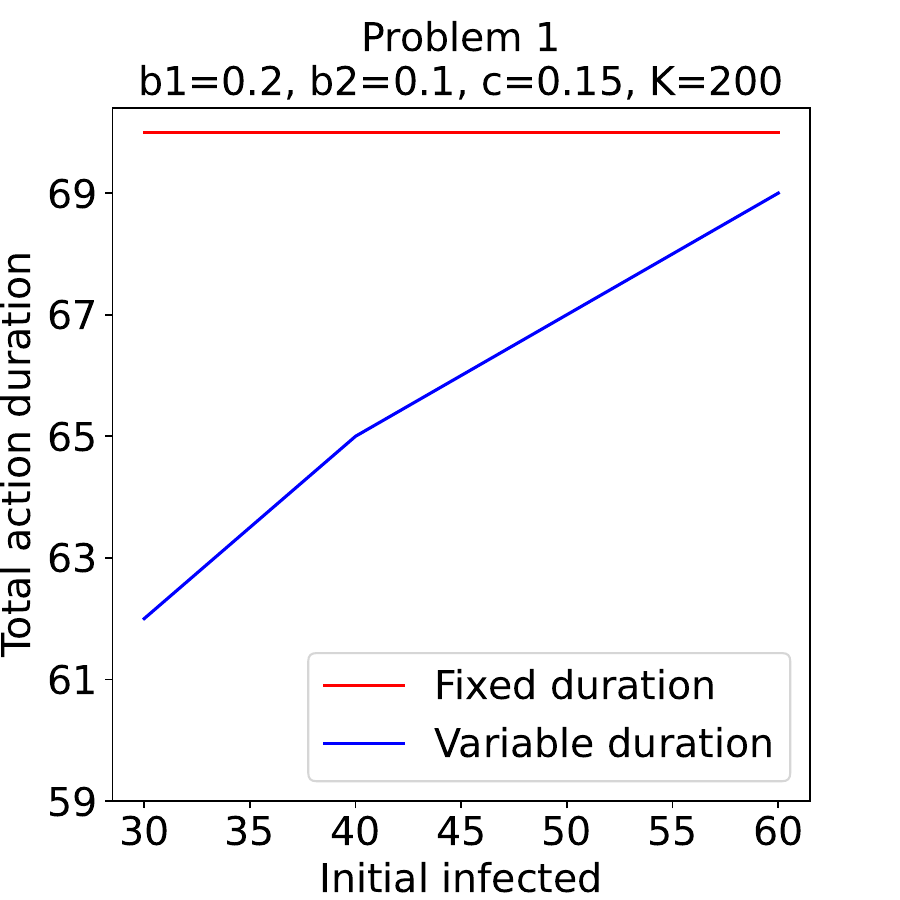}
    \label{fig:pairwise_reward_1}
    \centering
    \includegraphics[width=.49\linewidth]{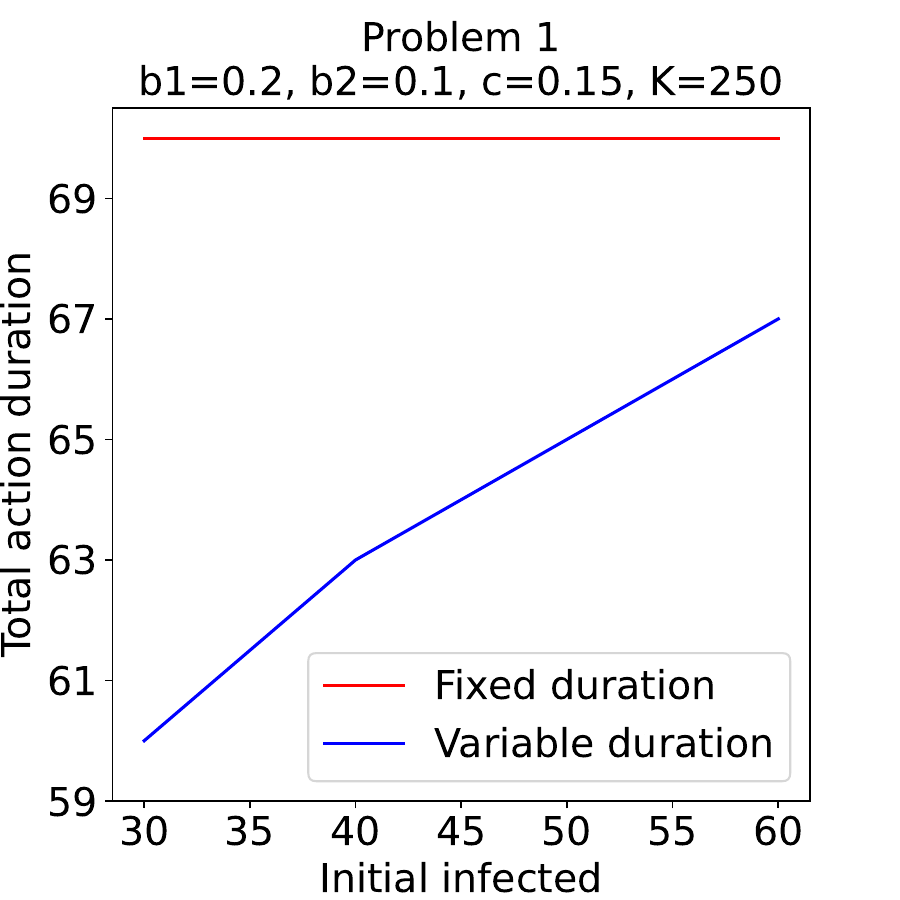}
    \label{fig:pairwise_reward_3}
\caption{Visualization of total action durations in days over 
different values of initial infected. The use of the variable 
step duration over fixed step duration results in higher solution 
quality (i.e., lower total action duration) in all instances.}
\label{fig:quality_2}
\end{figure}

\begin{figure}
\centering
\includegraphics[width=\linewidth]{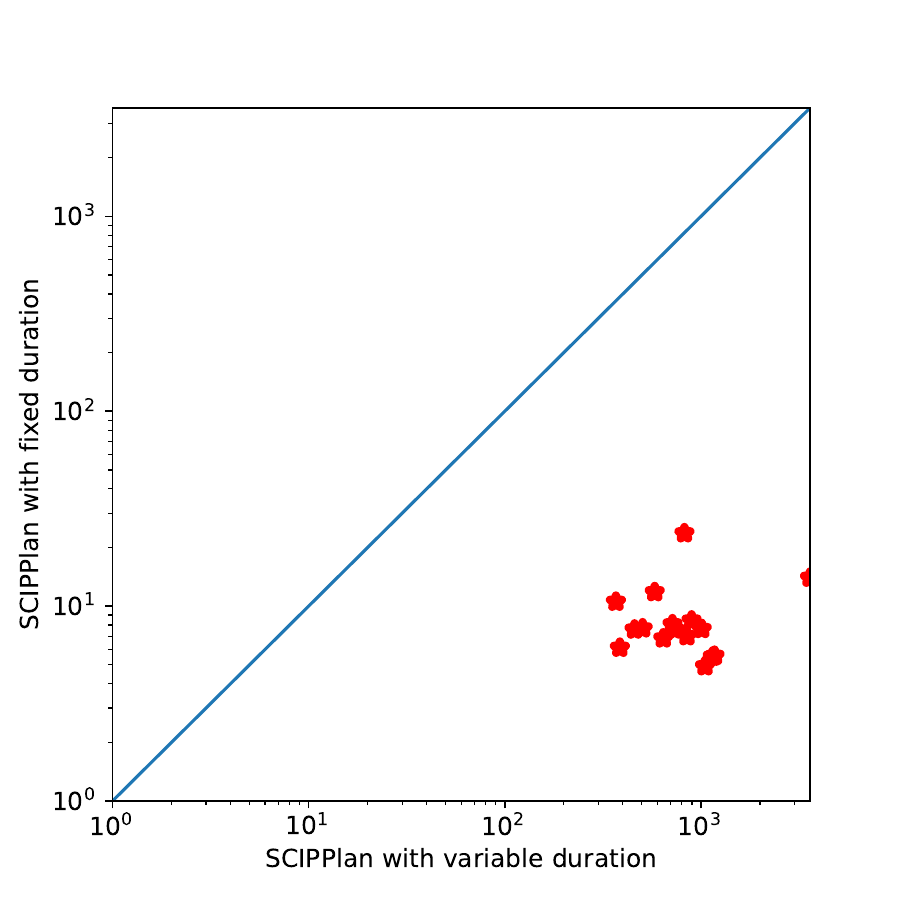}
\caption{Visualization of the effect of using variable step 
duration on the logarithmic runtime performance of SCIPPlan 
in seconds where each data point corresponds to an 
instance. The use of the variable step duration decreases 
the runtime performance of SCIPlan by around two orders of 
magnitude on average.}
\label{fig:pairwise_timing_2}
\end{figure}

\section{Discussion and Related Work}
\label{sec:discussion}

In this section, we discuss the importance of our theoretical 
and experimental results in relation to the literature 
for the purpose of opening new areas for future work.


In the Theoretical Results section, we have shown theoretical results on 
the finiteness and solution quality of SCIPPlan for solving the 
pandemic planning problem. Our theoretical results yield
strong guarantees on the solutions provided by SCIPPlan such 
that the temporal constraint is satisfied with relatively small 
values of $\gamma$ where $\gamma$ is 1.00 for $K=200$ and $\gamma$ 
is $1.25$ for $K=250$.

In the Experimental Results section, we have shown the computational 
performance of our approach to control a pandemic and other related 
systems. Our experimental results demonstrate the computational 
benefit of deriving valid inequalities for the planning problem. 
Some of these valid inequalities (e.g., monotonicity of state variables) 
are common in many other continuous time decision making problems where 
the state evolution is governed by a system of partial differential 
equations. For future work, we plan to investigate the automation 
of the process of valid inequality derivation for similar metric 
hybrid planning problems. Further, our experimental results also 
demonstrate the computational viability of using the exact 
solution equations of the SIR model (i.e., equations (\ref{sus}-
\ref{rem})) to construct a nonlinear state transition function $T$ 
using both exponential and logarithmic expressions, and perform 
formal reasoning over continuous time. Our experimental results 
compliment the success of the existing decision making systems 
that allow for continuous time decision making under additional 
restrictive assumptions (e.g., the assumption that the state 
transition function $T$ is piecewise 
linear~\cite{Shin2005,Coles2012,Chen2021}, 
polynomial~\cite{Cashmore2016} etc.). Finally, we highlight the 
importance of our approach given the availability of machine 
learning techniques~\cite{Raissi2019,Karniadakis2021} 
that can successfully approximate the solution equations of many 
systems of partial differential equations using data. For future 
work, we plan to apply our approach to control a pandemic and other 
related systems using SCIPPlan based on learned 
models~\cite{Wu2017,Say2018a, Say2019a,Say2020a,Say2020b,Say2020c,Wu2020,Say2021}.

\section{Conclusion}

In this paper, we have formalized the pandemic planning problem 
based on the solution equations of the SIR model and solved it 
using a metric hybrid planner. We have introduced 
valid inequalities to improve the runtime effectiveness of the 
planner. Finally, we have presented both theoretical and 
experimental results on the performance of our approach to 
pandemic planning. Overall, we have demonstrated the potential 
of using metric hybrid planning to help control pandemics and 
other related systems.

\section{Disclaimer}

This work studies the computational effectiveness of using the 
solution equations of an SIR model, and does not constitute as 
health advice and does not take the place of consulting with 
the experts of the relevant fields.

\bibliography{mybibfile.bib}


\end{document}